\documentclass{article}

\usepackage[preprint]{neurips_2026}

\usepackage[utf8]{inputenc}
\usepackage[T1]{fontenc}
\usepackage{hyperref}
\usepackage{url}
\usepackage{booktabs}
\usepackage{amsfonts}
\usepackage{amsmath}
\usepackage{amssymb}
\usepackage{amsthm}
\usepackage{graphicx}
\usepackage{multirow}
\usepackage{wrapfig}
\usepackage{xcolor}
\usepackage{colortbl}
\usepackage{nicefrac}
\usepackage{microtype}
\usepackage{array}
\usepackage{makecell}
\usepackage{algorithm}
\usepackage{algorithmic}

\definecolor{tableheader}{gray}{0.92}
\definecolor{tablealt}{gray}{0.965}
\definecolor{oursrow}{RGB}{232,243,255}
\definecolor{bestcol}{RGB}{0,90,160}

\renewcommand{\arraystretch}{1.18}
\newcommand{\best}[1]{\textcolor{bestcol}{\textbf{#1}}}
\newcommand{\gray}{\rowcolor{tablealt}}
\newcommand{\ours}{\rowcolor{oursrow}}

\newcommand{\xt}{\mathbf{x}_t}
\newcommand{\xone}{\mathbf{x}_1}
\newcommand{\xzero}{\mathbf{x}_0}

\newcommand{\Ftheta}{F_\theta}
\newcommand{\ftheta}{f_\theta}
\newcommand{\uavg}{\mathbf{u}_{\text{avg}}}
\newcommand{\sg}{\text{sg}}

\theoremstyle{definition}
\newtheorem{theorem}{Theorem}

\theoremstyle{remark}
\newtheorem{remark}{Remark}

\title{SnapFlow: One-Step Action Generation\\for Flow-Matching VLAs via Progressive Self-Distillation}

\author{
  Wuyang Luan \\
  Jilin University \\
  \texttt{luanwy25@mails.jlu.edu.cn} \\
  \And
  Junhui Li \\
  Chongqing University \\
  \texttt{junhuili@stu.cqu.edu.cn} \\
  \And
  Weiguang Zhao \\
  University of Liverpool \\
  \texttt{weiguang.zhao@liverpool.ac.uk} \\
  \And
  Wenjian Zhang\thanks{Corresponding author (industry).} \\
  GenY \\
  \texttt{zhangwenjian@genycc.cn} \\
  \And
  Tieru Wu \\
  Jilin University \\
  \texttt{wutr@jlu.edu.cn} \\
  \And
  Rui Ma\thanks{Corresponding author.} \\
  Jilin University \\
  \texttt{ruim@jlu.edu.cn}
}

\begin{document}
\maketitle

% =============================================================================
%  Abstract
% =============================================================================
\begin{abstract}
Vision-Language-Action (VLA) models based on flow matching---such as $\pi$0, $\pi$0.5, and SmolVLA---achieve state-of-the-art generalist robotic manipulation, yet their iterative denoising, typically 10 ODE steps, introduces substantial latency: on a modern GPU, denoising alone accounts for 80\% of end-to-end inference time.
Na\"ively reducing the step count is unreliable, degrading success on most tasks due to the velocity field being uncalibrated for single-step jumps.
We present SnapFlow, a \emph{plug-and-play} self-distillation method that compresses multi-step denoising into a \emph{single forward pass} (1-NFE) for flow-matching VLAs.
SnapFlow mixes standard flow-matching samples with consistency samples whose targets are two-step Euler shortcut velocities computed from the model's own marginal velocity predictions, avoiding the trajectory drift caused by conditional velocities, as we analyze theoretically.
A zero-initialized target-time embedding lets the network switch between local velocity estimation and global one-step generation within a single architecture.
SnapFlow requires no external teacher, no architecture changes, and trains in ${\sim}$12h on a single GPU.
We validate on two VLA architectures spanning a $6\times$ parameter range, with identical hyperparameters:
on \mbox{$\pi$0.5} (3B) across four LIBERO suites (40 tasks, 400 episodes), SnapFlow achieves 98.75\% average success---matching the 10-step teacher at 97.75\% and slightly exceeding it---with 9.6$\times$ denoising speedup and end-to-end latency reduced from 274\,ms to 83\,ms;
on SmolVLA (500M), it reduces MSE by 8.3\% with 3.56$\times$ end-to-end acceleration.
An action-step sweep on long-horizon tasks reveals that SnapFlow maintains its advantage across execution horizons, achieving 93\% at $n_{\text{act}}\!=\!5$ where the baseline reaches only 90\%.
SnapFlow is orthogonal to layer-distillation and token-pruning approaches, enabling compositional speedups.
\end{abstract}

% =============================================================================
%  1. Introduction
% =============================================================================
\section{Introduction}

Vision-Language-Action (VLA) models~\cite{black2024pi0,intelligence2025pi05,kim2024openvla,team2024octo} have advanced generalist robotic manipulation, with $\pi$0~\cite{black2024pi0} and $\pi$0.5~\cite{intelligence2025pi05} generating action trajectories via \emph{flow matching}~\cite{lipman2023flow}: a learned velocity field iteratively denoises Gaussian noise into a coherent action chunk through 10 Euler steps.

This iterative process is the primary inference bottleneck.
On an A800 GPU, each denoising step of $\pi$0.5 takes ${\sim}$23\,ms; the 10-step chain consumes ${\sim}$241\,ms---80\% of the total 274\,ms end-to-end latency, with the remaining 60\,ms spent on the shared VLM prefix.
On edge devices the problem is more acute: at 3\,Hz control frequency each cycle allows only ${\sim}$330\,ms for perception \emph{and} action generation, leaving almost no headroom for 10-step denoising.

Can fewer Euler steps suffice?
Na\"ively reducing the step count is unreliable: on LIBERO, 1-step inference drops from 97.75\% to 96.75\% average success.
The velocity field learned for 10-step integration is not calibrated for single-step jumps.

We propose SnapFlow, a self-distillation method that trains a flow-matching VLA to generate high-quality actions in a \emph{single forward pass}.
SnapFlow mixes standard flow-matching samples that preserve multi-step capability with \emph{consistency samples} whose target is the average velocity along a two-step Euler shortcut.
A learnable target-time projection lets the network distinguish these two objectives within a single architecture, progressively ``straightening'' the velocity field for accurate single-step generation.

Evaluated on $\pi$0.5 across all four LIBERO suites following the protocol of~\cite{intelligence2025pi05}, SnapFlow at 1-step achieves 98.75\% average success, matching the 10-step teacher at 97.75\% and slightly exceeding it.
The consistency objective directly optimizes single-step predictions, whereas multi-step Euler integration compounds discretization errors as predicted by Theorem~\ref{thm:error_accum}.
SnapFlow delivers a 9.6$\times$ denoising speedup, reducing end-to-end latency from 274\,ms to 83\,ms.

\paragraph{Contributions.}
\begin{itemize}
    \item \textbf{SnapFlow:} A progressive self-distillation framework that achieves 1-NFE inference for flow-matching VLAs via FM/consistency sample mixing and a target-time embedding---requiring no external teacher, no architecture changes, and only ${\sim}$12h of training on a single A800.
    \item \textbf{Favorable quality--speed trade-off in tested settings:} SnapFlow 1-step achieves 98.75\% average success on LIBERO, matching the 10-step baseline at 97.75\% and slightly exceeding it, with 9.6$\times$ denoising speedup.
    \item \textbf{Generality and orthogonality:} Validated on two representative flow-matching VLAs spanning 500M--3B with identical hyperparameters; orthogonal to layer-distillation methods~\cite{jeon2026shallow}, enabling compositional speedups.
\end{itemize}

% =============================================================================
%  2. Related Work
% =============================================================================
\section{Related Work}

\textbf{Flow-Matching VLAs and Their Latency Bottleneck.}
$\pi$0~\cite{black2024pi0} introduced flow matching as the action head for generalist VLAs; $\pi$0.5~\cite{intelligence2025pi05} scales this to 3B parameters; SmolVLA~\cite{shukor2025smolvla} provides a lightweight ${\sim}$500M alternative; complementary open VLA baselines include OpenVLA~\cite{kim2024openvla} and Octo~\cite{team2024octo}.
All share a critical bottleneck: iterative Euler denoising---typically 10 sequential forward passes through the action expert---dominates end-to-end latency.

\textbf{VLA Inference Acceleration.}
Recent works attack VLA latency from two complementary angles.
\emph{Architecture compression:} Shallow-$\pi$~\cite{jeon2026shallow} distills the $\pi$0.5 transformer from 18 to 6 layers for 2$\times$ speedup; EfficientVLA~\cite{yang2025efficientvla} dynamically skips layers and prunes visual tokens for 1.9$\times$.
\emph{Sampling compression:} our work belongs to this category---reducing the \emph{number of denoising steps} rather than the per-step cost.
The two axes are orthogonal and compose multiplicatively; see Sec.~\ref{sec:concurrent}.

\textbf{Fast Flow Models.}
Consistency Models~\cite{song2023consistency} enforce trajectory self-consistency for single-step generation and are closely related to continuous-time consistency formulations~\cite{lu2025sct}, with foundations in score/diffusion modeling~\cite{song2020score,ho2020denoising,sohl2015deep,karras2022elucidating}.
In the flow-matching setting, MeanFlow~\cite{geng2025meanflow} models average velocity; ShortCut~\cite{frans2025shortcut} uses two-step target decompositions; $\alpha$-Flow~\cite{zhang2025alphaflow} introduces FM-to-consistency curricula.
Prior work identifies trajectory drift from conditional velocities and proposes corrected consistency objectives.
These methods target image or video generation.

\textbf{Fast Sampling for Robot Policies.}
Consistency Policy~\cite{prasad2024consistency} applies consistency distillation to small DDPM U-Net policies with an EMA target network, building on diffusion-policy style robot control formulations~\cite{chi2023diffusion,ajay2022conditional,janner2022planning,carvalho2023motion,ke20243d}.
FlowPolicy~\cite{zql2025flowpolicy} uses consistency flow matching on 3D point clouds for single-step generation;
ManiFlow~\cite{yan2025maniflow} combines consistency flow training with a DiT-X architecture for 1--2~NFE manipulation across 60+ tasks;
FreqPolicy~\cite{wang2025freqpolicy} introduces frequency-domain consistency constraints on LIBERO.
SnapFlow differs in three respects: theoretical grounding in the corrected consistency objective of Theorems~\ref{thm:velocity_gap}--\ref{thm:error_accum}, which avoids trajectory drift; minimal intervention---a single zero-initialized MLP with no EMA, no auxiliary networks, and no frequency transforms; and comprehensive evaluation on billion-parameter VLAs across four LIBERO suites.

% =============================================================================
%  3. Method
% =============================================================================
\section{Method}
\label{sec:method}

\begin{figure}[t]
\centering
\includegraphics[width=\textwidth]{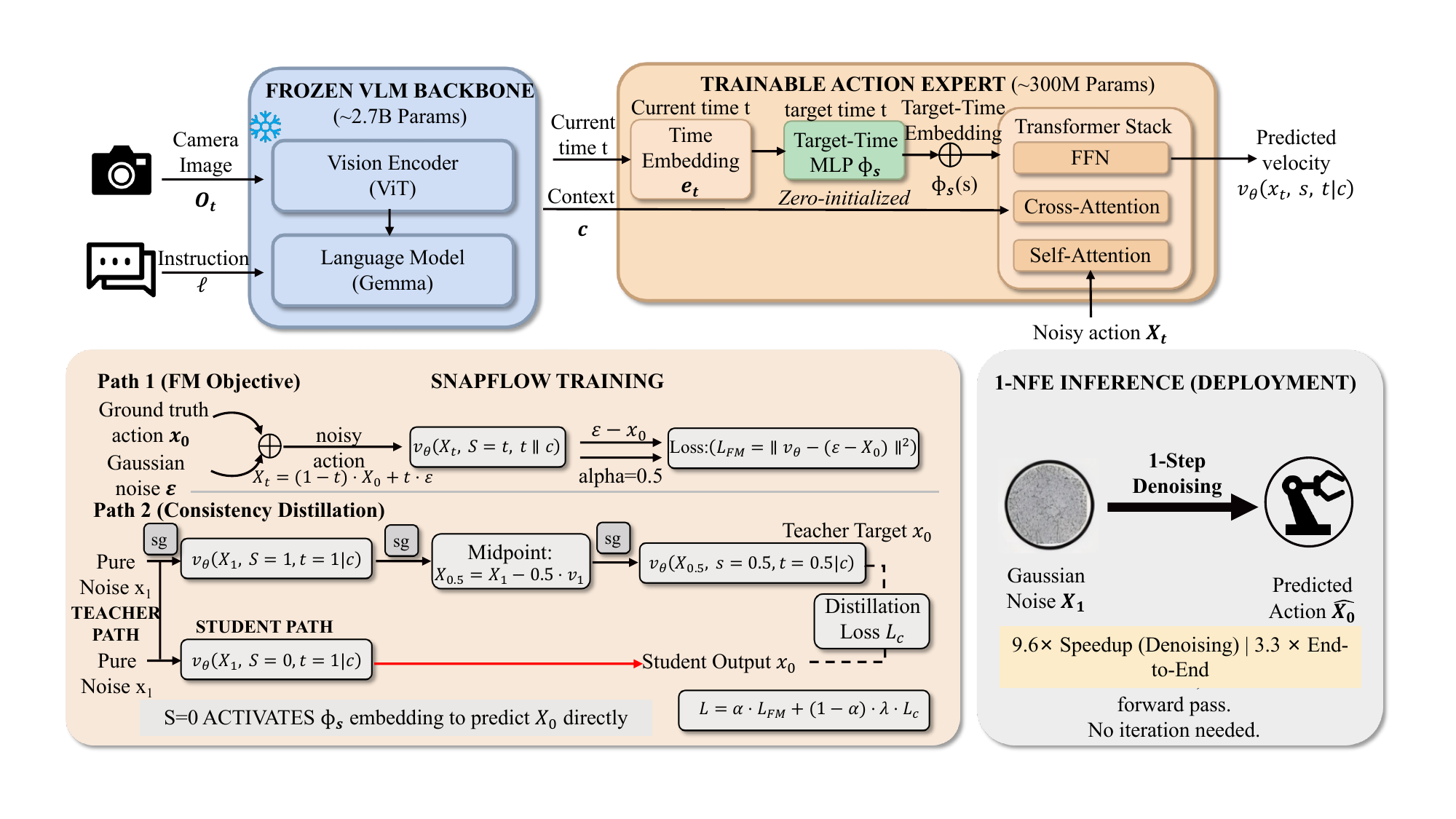}
\caption{\textbf{SnapFlow overview.} SnapFlow is a plug-and-play self-distillation method for flow-matching VLAs. During training, it mixes flow-matching and two-step Euler shortcut objectives; at inference, a single forward pass replaces the 10-step denoising loop. The VLM prefix is shared and unmodified.}
\label{fig:overview}
\end{figure}

We first review flow matching in VLAs (Sec.~\ref{sec:prelim}--\ref{sec:fast_flow}), analyze the trajectory consistency problem (Sec.~\ref{sec:trajectory_consist}), and then present the SnapFlow framework (Sec.~\ref{sec:snapflow}--\ref{sec:training}).

\subsection{Preliminaries: Flow Matching in VLAs}
\label{sec:prelim}

Flow-matching VLAs~\cite{black2024pi0,intelligence2025pi05} generate action chunks $\xzero \in \mathbb{R}^{H \times D}$ by learning a velocity field conditioned on a context $\mathbf{c}$ encoding the observation and language instruction.
Given a ground-truth action $\xzero$ and noise $\boldsymbol{\epsilon} \sim \mathcal{N}(\mathbf{0}, \mathbf{I})$, a flow path is defined by linear interpolation:
\begin{align}
    \xt = (1 - t)\,\xzero + t\,\boldsymbol{\epsilon}, \quad t \in [0, 1]
    \label{eq:interpolant}
\end{align}
The velocity along this path is the \emph{conditional velocity} $\mathbf{v}_t = \boldsymbol{\epsilon} - \xzero$, determined by the specific pair $(\xzero, \boldsymbol{\epsilon})$.
Since multiple pairs can produce the same $\xt$, the \emph{marginal velocity field} is defined as $\mathbf{u}_t(\xt) = \mathbb{E}[\mathbf{v}_t \mid \xt]$.
Flow matching trains a network $\Ftheta$ to approximate $\mathbf{u}_t$ using the conditional velocity as a surrogate:
\begin{align}
    \mathcal{L}_{\text{FM}} = \mathbb{E}_{t, \xzero, \boldsymbol{\epsilon}}\!\left[\left\|\Ftheta(\xt, t, t \mid \mathbf{c}) - (\boldsymbol{\epsilon} - \xzero)\right\|^2\right]
    \label{eq:fm_loss}
\end{align}
Here $\Ftheta(\xt, s, t \mid \mathbf{c})$ denotes the predicted average velocity from $t$ to target time $s$; at $s\!=\!t$ this reduces to the instantaneous velocity.

At inference, the model starts from pure noise $\xone \sim \mathcal{N}(\mathbf{0}, \mathbf{I})$ and integrates backward using $K$-step Euler:
\begin{align}
    \mathbf{x}_{t - \Delta t} = \xt - \Delta t \cdot \Ftheta(\xt, t, t \mid \mathbf{c}), \quad \Delta t = 1/K
    \label{eq:euler}
\end{align}
In $\pi$0.5, $K\!=\!10$ is the default, requiring 10 sequential forward passes through the action expert.

\subsection{Fast Flow Models and Average Velocity}
\label{sec:fast_flow}

Rather than using many Euler steps to approximate the ODE integral, a \emph{fast flow model} directly learns the average velocity between time $t$ and a target time $s < t$, enabling a linear mapping~\cite{geng2025meanflow,zhang2025alphaflow}:
\begin{align}
    \ftheta(\xt, s, t) = \xt - (t - s)\,\Ftheta(\xt, s, t \mid \mathbf{c})
    \label{eq:flow_map}
\end{align}
where $\Ftheta(\xt, s, t)$ approximates the true average velocity $\uavg(\xt, s, t) = \frac{1}{t-s}\!\int_s^t \!\mathbf{u}(\mathbf{x}_\tau, \tau)\,d\tau$.
Setting $s\!=\!0$ and $t\!=\!1$ yields the desired 1-NFE mapping from noise to action: $\hat{\xzero} = \xone - \Ftheta(\xone, 0, 1)$.

The trajectory consistency objective~\cite{song2023consistency,geng2025meanflow} enforces that the predicted endpoint $\ftheta(\xt, s, t)$ is invariant to the starting time $t$ along the same trajectory:
\begin{align}
    \mathbb{E}_{\xt}\!\left[\left\|\frac{d}{dt}\ftheta(\xt, s, t)\right\|^2\right] = \mathbb{E}_{\xt}\!\left[\left\|\nabla_{\xt}\ftheta \cdot \mathbf{u}_t + \partial_t \ftheta\right\|^2\right] = 0
    \label{eq:consistency_ideal}
\end{align}
In practice $\mathbf{u}_t$ is unknown and is typically replaced by the conditional velocity $\mathbf{v}_t = \boldsymbol{\epsilon} - \xzero$.
For standard flow matching at $s\!=\!t$, this substitution is valid because $\mathbb{E}[\mathbf{v}_t \mid \xt] = \mathbf{u}_t$.
However, for fast flow models that require trajectory consistency across a finite time span $s \neq t$, we show below that this substitution introduces systematic drift.

\subsection{Trajectory Consistency Analysis}
\label{sec:trajectory_consist}

Two theoretical results motivate SnapFlow's design; complete proofs are in Appendix~\ref{app:proofs}.

\begin{theorem}[Conditional--Marginal Velocity Discrepancy]
\label{thm:velocity_gap}
Let $\xzero \sim p_{\text{data}}$ be non-degenerate (not a Dirac mass) and $\boldsymbol{\epsilon} \sim \mathcal{N}(\mathbf{0}, \mathbf{I})$.
Let $\mathbf{v}_t = \boldsymbol{\epsilon} - \xzero$ be the conditional velocity and $\mathbf{u}_t = \mathbb{E}[\mathbf{v}_t \mid \xt]$ the marginal velocity.
The conditional covariance $\boldsymbol{\Sigma}_t(\xt) = \mathbb{E}[(\mathbf{v}_t - \mathbf{u}_t)(\mathbf{v}_t - \mathbf{u}_t)^\top \mid \xt]$ satisfies $\boldsymbol{\Sigma}_t(\xt) \neq \mathbf{0}$ almost surely for all $t \in [0,1]$.
\end{theorem}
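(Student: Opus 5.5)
The plan is to reduce $\boldsymbol{\Sigma}_t(\xt)$ to the posterior covariance of $\xzero$ (or of $\boldsymbol{\epsilon}$) given $\xt$. Then I would show that this posterior is never a Dirac mass, because the Gaussian likelihood is strictly positive everywhere. Throughout I assume $p_{\text{data}}$ has finite second moments, so that all covariances are finite. I would split into the interior case $t\in(0,1)$ and the two endpoints, since the change of variables degenerates at $t=0$ and $t=1$.

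\emph{Interior case $t\in(0,1)$.} From Eq.~\eqref{eq:interpolant} I get $\boldsymbol{\epsilon} = (\xt - (1-t)\xzero)/t$. Substituting gives
\begin{align*}
\mathbf{v}_t = \frac{\xt - \xzero}{t}.
\end{align*}
Conditionally on $\xt$, the vector $\mathbf{v}_t$ is therefore an affine function of $\xzero$ with nonzero slope $-1/t$. Hence
\begin{align*}
\boldsymbol{\Sigma}_t(\xt) = t^{-2}\,\mathrm{Cov}(\xzero \mid \xt).
\end{align*}
By Bayes' rule, the regular conditional law of $\xzero$ given $\xt=\mathbf{x}$ is
\begin{align*}
\pi_{\mathbf{x}}(d\mathbf{y}) \propto \exp\!\left(-\frac{\|\mathbf{x}-(1-t)\mathbf{y}\|^2}{2t^2}\right) p_{\text{data}}(d\mathbf{y}).
\end{align*}
The normalizer is finite and positive for every $\mathbf{x}$. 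The weight is strictly positive, so $\pi_{\mathbf{x}}$ and $p_{\text{data}}$ are mutually absolutely continuous.

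Suppose $\pi_{\mathbf{x}}=\delta_{\mathbf{y}_0}$ for some $\mathbf{y}_0$. Then $\pi_{\mathbf{x}}(\mathbb{R}^{H\times D}\setminus\{\mathbf{y}_0\})=0$. By mutual absolute continuity, $p_{\text{data}}$ would also give zero mass to the complement of $\{\mathbf{y}_0\}$, so $p_{\text{data}}=\delta_{\mathbf{y}_0}$. This contradicts non-degeneracy. A non-Dirac law with finite second moment has nonzero covariance, so $\mathrm{Cov}(\xzero\mid\xt=\mathbf{x})\neq\mathbf{0}$. This holds for every $\mathbf{x}$, which is stronger than almost surely.

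\emph{Endpoints.}
\begin{itemize}
\item At $t=0$ we have $\xt=\xzero$, so $\mathbf{v}_t=\boldsymbol{\epsilon}-\xzero$ with $\boldsymbol{\epsilon}$ independent of $\xzero$. This gives $\boldsymbol{\Sigma}_0=\mathbf{I}\neq\mathbf{0}$.
\item At $t=1$ we have $\xt=\boldsymbol{\epsilon}$, which is independent of $\xzero$. This gives $\boldsymbol{\Sigma}_1=\mathrm{Cov}_{p_{\text{data}}}(\xzero)$, which is nonzero because $p_{\text{data}}$ is not a Dirac mass.
\end{itemize}

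\emph{Expected obstacle.} The main obstacle is not algebraic but measure-theoretic. One must work with a regular version of the conditional law, so that the statement is meaningful pointwise or almost surely. One must also check that the Bayes formula above is a valid version without assuming $p_{\text{data}}$ has a density. I would handle this by verifying directly that the displayed kernel $\pi_{\mathbf{x}}$ satisfies the defining disintegration identity against the joint law of $(\xzero,\xt)$. The almost-sure qualifier then covers the choice of version.
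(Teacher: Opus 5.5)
Your proposal is correct and follows essentially the same route as the paper: the substitution $\mathbf{v}_t = (\xt - \xzero)/t$ reducing $\boldsymbol{\Sigma}_t$ to $t^{-2}\,\mathrm{Cov}(\xzero \mid \xt)$, the Bayes-rule posterior with a strictly positive Gaussian weight, and the direct computation $\boldsymbol{\Sigma}_0 = \mathbf{I}$ at $t=0$. Your mutual-absolute-continuity argument, together with the explicit finite-second-moment assumption, makes rigorous the paper's bare assertion that the posterior cannot be a Dirac mass. Your separate $t=1$ case is harmless but unnecessary: there the Gaussian weight is constant in $\mathbf{y}$, so your interior argument applies verbatim, and the paper simply treats $t \in (0,1]$ together.
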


\emph{Proof sketch.}
At $t\!=\!0$, $\xt = \xzero$ is deterministic given the data, so $\mathbf{v}_0 = \boldsymbol{\epsilon} - \xzero$ has variance $\text{Var}(\boldsymbol{\epsilon}) = \mathbf{I}$.
For $t \in (0,1]$, substituting $\boldsymbol{\epsilon} = (\xt - (1\!-\!t)\xzero)/t$ gives $\boldsymbol{\Sigma}_t = t^{-2}\text{Var}(\xzero \mid \xt)$.
Since $p_{\text{data}}$ is non-degenerate and $\boldsymbol{\epsilon}$ has full support, the posterior $p(\xzero \mid \xt)$ cannot be a point mass, so $\text{Var}(\xzero \mid \xt) \neq \mathbf{0}$ a.s.
\hfill$\square$

\begin{theorem}[Trajectory Drift Under Conditional Training]
\label{thm:drift}
Let the conditional training objective be $\mathcal{L}_{\text{cond}}(\theta) = \mathbb{E}_{\xt, \mathbf{v}_t}[\|\nabla_{\xt}\ftheta \cdot \mathbf{v}_t + \partial_t\ftheta\|^2]$.
Then:
\begin{align}
    \mathcal{L}_{\text{cond}}(\theta) &= \underbrace{\mathbb{E}_{\xt}\!\left[\left\|\nabla_{\xt}\ftheta \cdot \mathbf{u}_t + \partial_t\ftheta\right\|^2\right]}_{\mathcal{L}_{\text{consist}}(\theta)} + \underbrace{\mathbb{E}_{\xt}\!\left[\text{Tr}\!\left(\nabla_{\xt}\ftheta\,\boldsymbol{\Sigma}_t(\xt)\,(\nabla_{\xt}\ftheta)^\top\right)\right]}_{\mathcal{L}_{\text{var}}(\theta)}
    \label{eq:drift_decomp}
\end{align}
Optimizing $\mathcal{L}_{\text{cond}}$ equals optimizing the true consistency objective $\mathcal{L}_{\text{consist}}$ only if $\boldsymbol{\Sigma}_t = \mathbf{0}$, which Theorem~\ref{thm:velocity_gap} shows is never the case.
\end{theorem}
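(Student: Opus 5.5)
The plan is a bias--variance decomposition, done pointwise in $\xt$ and then averaged.

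Fix $t$ and $s$, and condition on $\xt$. The key observation is that $\nabla_{\xt}\ftheta$ and $\partial_t\ftheta$ are deterministic functions of $(\xt, s, t)$. Given $\xt$, the only randomness left in the integrand is $\mathbf{v}_t$. We therefore write
\begin{align*}
\mathbf{a} &:= \nabla_{\xt}\ftheta\cdot\mathbf{u}_t + \partial_t\ftheta, &
\mathbf{b} &:= \nabla_{\xt}\ftheta\,(\mathbf{v}_t - \mathbf{u}_t).
\end{align*}
The integrand then becomes $\|\mathbf{a}+\mathbf{b}\|^2$, where $\mathbf{a}$ is $\xt$-measurable. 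Use the tower property $\mathcal{L}_{\text{cond}} = \mathbb{E}_{\xt}\big[\mathbb{E}[\|\mathbf{a}+\mathbf{b}\|^2 \mid \xt]\big]$ and expand the square into $\|\mathbf{a}\|^2 + 2\mathbf{a}^\top\mathbf{b} + \|\mathbf{b}\|^2$.

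\textbf{Cross term.} Pull $\mathbf{a}$ and $\nabla_{\xt}\ftheta$ out of the conditional expectation. This leaves
\begin{align*}
2\,\mathbf{a}^\top \nabla_{\xt}\ftheta\,\mathbb{E}[\mathbf{v}_t - \mathbf{u}_t \mid \xt] = 0,
\end{align*}
which vanishes because $\mathbf{u}_t = \mathbb{E}[\mathbf{v}_t\mid\xt]$ by definition.

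\textbf{Quadratic term.} Use the trace identity:
\begin{align*}
\mathbb{E}[\|\mathbf{b}\|^2\mid\xt] = \mathrm{Tr}\big(\nabla_{\xt}\ftheta\,\mathbb{E}[(\mathbf{v}_t-\mathbf{u}_t)(\mathbf{v}_t-\mathbf{u}_t)^\top\mid\xt]\,(\nabla_{\xt}\ftheta)^\top\big).
\end{align*}
The middle factor is exactly $\boldsymbol{\Sigma}_t(\xt)$. Averaging over $\xt$ (and over $t$, if $t$ is sampled) gives $\mathcal{L}_{\text{consist}} + \mathcal{L}_{\text{var}}$, which is Eq.~\eqref{eq:drift_decomp}. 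The only regularity needed is that $\ftheta$ is differentiable and $\mathbf{v}_t$ has finite second moments, so every expectation is finite. Both hold because $\boldsymbol{\epsilon}$ is Gaussian; we additionally assume $p_{\text{data}}$ has finite second moments.

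\textbf{The ``only if'' claim.} I would phrase this as the statement that $\mathcal{L}_{\text{cond}}$ and $\mathcal{L}_{\text{consist}}$ differ by a $\theta$-dependent term.
\begin{itemize}
\item Nonnegativity: $\mathcal{L}_{\text{var}}(\theta)\ge 0$, since it is the trace of a PSD matrix.
\item Vanishing: $\mathcal{L}_{\text{var}}(\theta)=0$ iff $\nabla_{\xt}\ftheta\,\boldsymbol{\Sigma}_t^{1/2}=\mathbf{0}$ a.s.
\item Consequence: unless $\boldsymbol{\Sigma}_t\equiv\mathbf{0}$, this term penalizes the Jacobian along the directions spanned by $\boldsymbol{\Sigma}_t$. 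The minimizers of $\mathcal{L}_{\text{cond}}$ are then biased toward flow maps with collapsed Jacobians and need not minimize $\mathcal{L}_{\text{consist}}$.
\end{itemize}
Theorem~\ref{thm:velocity_gap} gives $\boldsymbol{\Sigma}_t(\xt)\neq\mathbf{0}$ a.s. for every $t$, so the extra term is genuinely present.

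The main obstacle is making ``equals optimizing only if'' precise, because the decomposition by itself is routine. The extra term has to actually change the argmin, not just shift the loss by a constant. I would show this by contradiction. Suppose $\theta^\star$ is a true consistency solution, i.e.\ $\mathcal{L}_{\text{consist}}(\theta^\star)=0$, as for the exact flow map. Then $\nabla_{\xt}f^\star$ is the Jacobian of the ODE flow map, which is invertible. By Theorem~\ref{thm:velocity_gap} this gives $\mathcal{L}_{\text{var}}(\theta^\star)>0$. A sufficiently expressive family can lower $\mathcal{L}_{\text{var}}$ at a first-order smaller cost in $\mathcal{L}_{\text{consist}}$, for example by shrinking the Jacobian toward zero, which in the limit is the constant map. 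Hence $\theta^\star$ is not a stationary point of $\mathcal{L}_{\text{cond}}$, and this is the claimed drift.
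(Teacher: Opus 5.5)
Your decomposition is the same as the paper's proof: condition on $\xt$, write $\mathbf{v}_t = \mathbf{u}_t + \boldsymbol{\delta}_t$, kill the cross term via $\mathbb{E}[\boldsymbol{\delta}_t \mid \xt] = \mathbf{0}$, and evaluate the quadratic term as $\mathrm{Tr}(\nabla_{\xt}\ftheta\,\boldsymbol{\Sigma}_t\,(\nabla_{\xt}\ftheta)^\top)$.

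You differ on the ``only if'' clause. The paper stops at ``$\mathcal{L}_{\text{var}} > 0$ for any $\ftheta$ with $\nabla_{\xt}\ftheta \neq \mathbf{0}$''. That is imprecise: you need $\nabla_{\xt}\ftheta\,\boldsymbol{\Sigma}_t^{1/2} \neq \mathbf{0}$ on a set of positive measure, as you state. It is also insufficient on its own, since a positive but $\theta$-independent offset would not move the argmin; the $\mathrm{Tr}(\boldsymbol{\Sigma}_t)$ in the paper's own Appendix~\ref{app:eq7_derivation} is exactly such an offset. Your route is what actually shows drift: the extra term depends on $\theta$, and at a zero of $\mathcal{L}_{\text{consist}}$ its gradient vanishes, so $\nabla_\theta\mathcal{L}_{\text{cond}} = \nabla_\theta\mathcal{L}_{\text{var}} \neq \mathbf{0}$.

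One refinement is needed. $\mathcal{L}_{\text{consist}}(\theta^\star) = 0$ does not by itself make $f_{\theta^\star}$ the flow map, because any constant map zeroes both $\mathcal{L}_{\text{consist}}$ and $\mathcal{L}_{\text{var}}$. You should impose the boundary condition $\ftheta(\mathbf{x}, s, s) = \mathbf{x}$. It is built into $\ftheta = \xt - (t-s)\Ftheta$ and used in the proof of Theorem~\ref{thm:error_accum}. Under it, the flow map $f^\star$ is the essentially unique zero of $\mathcal{L}_{\text{consist}}$.

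Your shrinking-toward-a-constant direction then violates the constraint, so use the admissible perturbation $(1 - \eta(t-s))\,f^\star$ instead. It raises $\mathcal{L}_{\text{consist}}$ only by $\eta^2\,\mathbb{E}\|f^\star\|^2$. It changes $\mathcal{L}_{\text{var}}$ at first order by $-2\eta\,\mathbb{E}[(t-s)\,\|\nabla_{\xt}f^\star\,\boldsymbol{\Sigma}_t^{1/2}\|_F^2]$. This is strictly negative by Theorem~\ref{thm:velocity_gap} and invertibility of $\nabla_{\xt}f^\star$, which makes your non-stationarity argument rigorous.
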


\emph{Proof sketch.}
Decompose $\mathbf{v}_t = \mathbf{u}_t + (\mathbf{v}_t - \mathbf{u}_t)$ in the quadratic loss.
The cross term vanishes because $\mathbb{E}[\mathbf{v}_t - \mathbf{u}_t \mid \xt] = \mathbf{0}$.
The residual is $\mathcal{L}_{\text{var}}$, a positive-definite quadratic form in $\nabla_{\xt}\ftheta$ weighted by $\boldsymbol{\Sigma}_t$.
\hfill$\square$

\begin{remark}
The variance term $\mathcal{L}_{\text{var}}$ forces $\nabla_{\xt}\ftheta$ toward zero in high-variance directions, suppressing the model's ability to faithfully capture trajectory curvature.
For standard flow matching ($s\!=\!t$), this does not matter because $\ftheta(\xt, t, t) = \xt$ is trivially consistent.
For \emph{fast} flow models ($s \neq t$), however, $\mathcal{L}_{\text{var}}$ induces systematic trajectory drift that degrades one-step generation quality.
\end{remark}

\begin{theorem}[Cumulative Error in Consistency Mapping]
\label{thm:error_accum}
Let $f^*(\xt, s, t)$ denote the ideal consistency mapping and $\ftheta(\xt, s, t)$ the learned model.
Define the local residual $R(t) = \partial_t\ftheta + \nabla_{\xt}\ftheta \cdot \mathbf{u}_t$ and the total error $e(s,t) = \ftheta(\xt, s, t) - f^*(\xt, s, t)$.
Then:
\begin{align}
    e(s, t) = \int_s^t R(r)\,dr
    \label{eq:error_integral}
\end{align}
The total approximation error grows with the time span $|t - s|$ via accumulation of local residuals.
\end{theorem}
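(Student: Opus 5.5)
The plan is to read the statement along a single ODE trajectory. Fix the endpoint time $s$ and a trajectory $\tau \mapsto \mathbf{x}_\tau$ of the marginal flow $\frac{d}{d\tau}\mathbf{x}_\tau = \mathbf{u}_\tau(\mathbf{x}_\tau)$ passing through $\xt$ at time $t$. The ideal mapping $f^*(\mathbf{x}_\tau, s, \tau)$ is the point reached from $\mathbf{x}_\tau$ by following this ODE back to time $s$. It is therefore constant in $\tau$ along the trajectory, i.e.\ $\frac{d}{d\tau} f^*(\mathbf{x}_\tau, s, \tau) = 0$; this is the ideal consistency condition \eqref{eq:consistency_ideal}. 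It also satisfies the boundary condition $f^*(\mathbf{x}_s, s, s) = \mathbf{x}_s$.

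The learned map obeys the same boundary condition by construction. From \eqref{eq:flow_map}, $\ftheta(\mathbf{x}_s, s, s) = \mathbf{x}_s - 0\cdot\Ftheta = \mathbf{x}_s$, so $e(s,s) = 0$. I will use this as the initial condition for an ODE in the second time argument.

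Next, set $g(r) = e(s, r)$ evaluated at $\mathbf{x}_r$ on the trajectory, and differentiate with the chain rule, using $\dot{\mathbf{x}}_r = \mathbf{u}_r$:
\begin{align*}
\frac{d}{dr}\ftheta(\mathbf{x}_r, s, r) = \partial_r \ftheta + \nabla_{\mathbf{x}}\ftheta \cdot \mathbf{u}_r = R(r),
\end{align*}
while $\frac{d}{dr} f^*(\mathbf{x}_r, s, r) = 0$ by the invariance above. Hence $g'(r) = R(r)$. The fundamental theorem of calculus on $[s,t]$, together with $g(s) = 0$, gives $e(s,t) = \int_s^t R(r)\,dr$, which is \eqref{eq:error_integral}.

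For the qualitative claim, the triangle inequality gives $\|e(s,t)\| \le \int_s^t \|R(r)\|\,dr \le |t-s| \sup_r \|R(r)\|$. If the residual is bounded (say of size $\delta$ after training), the error is at most linear in the span $|t-s|$. This is the sense in which error "accumulates"; it is an upper bound, not a guarantee of growth.

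The main obstacle is making precise that $R$ is evaluated along the true marginal trajectory through $\xt$, rather than at the fixed point $\xt$. This requires existence and uniqueness of the marginal flow, e.g.\ $\mathbf{u}$ Lipschitz in $\mathbf{x}$ on $[s,t]$, and $\ftheta$ being $C^1$ so the chain rule applies. I will state these regularity assumptions explicitly and interpret $R(r)$ as $R$ at $(\mathbf{x}_r, r)$.

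A second subtlety arises when $s=0$ and $\mathbf{u}$ is singular near $t=0$. There I would first work on $[s', t]$ with $s' > s$ and pass to the limit.
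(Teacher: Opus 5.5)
Your proposal is correct and follows essentially the same route as the paper's proof. Both arguments evaluate along the marginal-flow trajectory through $\xt$, use that $f^*$ is constant along it so that $\frac{d}{dr}e = R(r)$, impose $e(s,s)=\mathbf{0}$, and integrate. Your extra steps go beyond what the paper spells out: you derive $\ftheta(\mathbf{x}_s,s,s)=\mathbf{x}_s$ from Eq.~\eqref{eq:flow_map}, state the Lipschitz and $C^1$ hypotheses, and read the ``growth'' claim as the bound $\|e(s,t)\|\le |t-s|\sup_r\|R(r)\|$ rather than a guaranteed increase.
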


\begin{remark}
Theorem~\ref{thm:error_accum} suggests why a single-step model can match a multi-step teacher: the Euler integrator compounds discretization error over $K$ steps, whereas the 1-NFE consistency model learns a direct mapping that avoids this accumulation.
Whether this advantage materializes depends on $\|R(t)\|$ relative to the single-step approximation error; Sec.~\ref{sec:pareto} provides empirical evidence that it does for VLA action prediction.
\end{remark}

\subsection{SnapFlow: Corrected Consistency Training for VLAs}
\label{sec:snapflow}

Motivated by Theorems~\ref{thm:velocity_gap}--\ref{thm:error_accum}, SnapFlow replaces the conditional velocity in the consistency target with the model's own marginal velocity prediction and uses progressive mixing to stabilize training.

\paragraph{Corrected Consistency Objective.}
Prior theoretical analysis of corrected consistency objectives shows that replacing $\mathbf{u}_t$ with $\mathbf{v}_t$ in the first term of the consistency loss introduces only a parameter-free constant (Appendix~\ref{app:eq7_derivation}), but the total-derivative term must use the marginal velocity estimate $\mathbf{u}_\theta = \Ftheta(\xt, t, t)$ to avoid drift per Theorem~\ref{thm:drift}.
This yields:
\begin{align}
    \mathcal{L}_{\text{consist}} = \mathbb{E}_{\xt}\!\left[\left\|\Ftheta(\xt, s, t) - \sg\!\left(\mathbf{v}_t - (t-s)\!\left(\nabla_{\xt}\Ftheta \cdot \mathbf{u}_\theta + \partial_t\Ftheta\right)\right)\right\|^2\right]
    \label{eq:corrected_consist}
\end{align}
where $\sg(\cdot)$ denotes stop-gradient and $\mathbf{u}_\theta = \Ftheta(\xt, t, t \mid \mathbf{c})$ is the model's marginal velocity estimate, maintained by the FM component of training.

\paragraph{Two-Step Euler Shortcut Target.}
Computing $\nabla_{\xt}\Ftheta \cdot \mathbf{u}_\theta + \partial_t\Ftheta$ is expensive for billion-parameter VLAs.
We instead implement Eq.~\eqref{eq:corrected_consist} via a two-step Euler shortcut~\cite{frans2025shortcut}, evaluating the model at two time points and averaging their velocities:
\begin{align}
    \mathbf{x}_{0.5} &= \xone - 0.5 \cdot \sg\!\left(\Ftheta(\xone, 1, 1 \mid \mathbf{c})\right) \label{eq:midpoint}\\[4pt]
    \mathbf{v}_{\text{target}} &= \frac{1}{2}\!\left[\sg\!\left(\Ftheta(\xone, 1, 1 \mid \mathbf{c})\right) + \sg\!\left(\Ftheta(\mathbf{x}_{0.5}, 0.5, 0.5 \mid \mathbf{c})\right)\right]
    \label{eq:consistency_target}
\end{align}
The consistency loss then trains the 1-step velocity to match this two-step shortcut:
\begin{align}
    \mathcal{L}_{\text{shortcut}} = \left\|\Ftheta(\xone, 0, 1 \mid \mathbf{c}) - \mathbf{v}_{\text{target}}\right\|^2
    \label{eq:shortcut_loss}
\end{align}
The two-step Euler target better estimates the true average velocity than $\mathbf{v}_t$ because it uses the model's marginal velocity predictions at both $t\!=\!1$ and $t\!=\!0.5$, effectively approximating the integral $\int_0^1 \mathbf{u}(\mathbf{x}_\tau,\tau)\,d\tau$ via the trapezoidal rule rather than a single conditional sample.
As the model improves during training, these marginal velocity estimates become more accurate, creating a virtuous cycle: better $\mathbf{u}_\theta$ yields a better shortcut target, which in turn produces a better 1-step predictor.

\paragraph{Progressive FM/Consistency Mixing.}
Following $\alpha$-Flow~\cite{zhang2025alphaflow}, we mix the FM and consistency objectives with ratio $\alpha$:
\begin{align}
    \mathcal{L} = \alpha \cdot \mathcal{L}_{\text{FM}} + (1 - \alpha) \cdot \lambda \cdot \mathcal{L}_{\text{shortcut}}
    \label{eq:total_loss}
\end{align}
The FM component maintains the velocity estimator $\mathbf{u}_\theta$ used in the consistency target; the consistency component teaches accurate one-step jumps; $\lambda$ balances their gradient magnitudes.

\subsection{Target-Time Embedding}
\label{sec:target_time}

To let $\Ftheta$ distinguish FM ($s\!=\!t$) from consistency ($s\!=\!0$) samples without modifying the pretrained architecture, we inject a \emph{target-time embedding} $\phi_s$~\cite{lee2025dmf}: a zero-initialized two-layer MLP that encodes $s$ and adds to the existing time embedding before each transformer block.
Zero initialization preserves the teacher at step~0; $\phi_s$ is the \emph{only} new parameter, making SnapFlow applicable to any flow-matching VLA by a single addition to the time-embedding pathway.

\subsection{Training and Inference}
\label{sec:training}

SnapFlow freezes the VLM backbone and trains only the action expert and $\phi_s$---about 10\% of parameters---with gradient checkpointing, for 30k steps on a single A800 in ${\sim}$12h; full hyperparameters are in Appendix~\ref{app:hyperparams}.
At deployment, a single forward pass produces the action chunk:
\begin{align}
    \hat{\xzero} = \xone - \Ftheta(\xone, s\!=\!0, t\!=\!1 \mid \mathbf{c}), \quad \xone \sim \mathcal{N}(\mathbf{0}, \mathbf{I})
    \label{eq:1nfe}
\end{align}
yielding ${\sim}$83\,ms E2E (3.3$\times$ faster than the 274\,ms baseline).

% =============================================================================
%  4. Experiments
% =============================================================================
\section{Experiments}
\label{sec:experiments}

\subsection{Experimental Setup}
\label{sec:setup}

\textbf{Models.}
We evaluate on two flow-matching VLAs spanning a $6\times$ parameter range to demonstrate plug-and-play generality:
$\pi$0.5~\cite{intelligence2025pi05}, a 3B VLA with PaliGemma backbone and cross-attention action expert, and
SmolVLA~\cite{shukor2025smolvla}, a ${\sim}$500M VLA with SmolVLM backbone and concatenation-based expert.
These cover two distinct VLM backbones and two different action expert designs.
Published $\pi$0~\cite{black2024pi0} results serve as a cross-model reference.

\textbf{Benchmarks.}
For $\pi$0.5 we use LIBERO~\cite{liu2024libero}: four suites, 10 tasks each, 10 episodes per task (400 total), following the protocol of~\cite{intelligence2025pi05,black2024pi0}.
All methods share the same LeRobot evaluation pipeline and seeds.\footnote{A known LeRobot issue may cause episodes within the same task to share initial states at \texttt{batch\_size=1}; this affects all methods equally.}
Offline metrics use 500 held-out samples; latency is profiled on a single A800-80G.

\textbf{Baselines.}
\textbf{Baseline 10-step}: pretrained model with default 10-step Euler;
\textbf{Na\"ive 1-step}: same model with 1 step, no retraining;
\textbf{SnapFlow 1-step}: distilled model with 1-step inference.

\subsection{Main Results}
\label{sec:main_results}

All flow-matching VLAs are designed and deployed with 10-step Euler denoising as the standard configuration~\cite{black2024pi0, intelligence2025pi05, shukor2025smolvla}.
Table~\ref{tab:main} presents the central result: SnapFlow improves both quality and speed vs.\ this baseline across two VLAs spanning a $6\times$ parameter range, with no architecture changes and identical hyperparameters.
On $\pi$0.5 it achieves 98.75\% LIBERO success at 1 step, matching or exceeding the teacher at 97.75\%.
On SmolVLA it reduces MSE by 8.3\% and improves CosSim by 6.9\% with 3.56$\times$ E2E acceleration.

\begin{table}[t]
\centering
\caption{\textbf{LIBERO closed-loop evaluation: SnapFlow vs.\ the VLA landscape.}
$\pi$0.5: Baseline uses 10-step Euler; Na\"ive sets 1 step without retraining; SnapFlow uses 1-step after distillation ($\alpha\!=\!0.5$, $\lambda\!=\!0.1$, 30k steps).
Published baselines$^\dagger$ provide cross-model context.
$^\dagger$Published results; OpenVLA/Octo/DP use per-suite fine-tuning (favors them), while $\pi$0/$\pi$0.5/SnapFlow use a \emph{single} model for all 4 suites.
All latency on A800-80G. SmolVLA (0.5B) results in Tables~\ref{tab:detailed_offline}--\ref{tab:pareto}. \best{Blue bold}: best.}
\label{tab:main}
\vspace{1mm}
\renewcommand{\arraystretch}{1.28}
\resizebox{\textwidth}{!}{%
\begin{tabular}{@{} l c c  cccc c  cc c  cc @{}}
\toprule
& & & \multicolumn{4}{c}{\textbf{LIBERO Success (\%)}} & & \multicolumn{2}{c}{\textbf{Offline}} & & \multicolumn{2}{c}{\textbf{Latency (A800)}} \\
\cmidrule(lr){4-7} \cmidrule(lr){9-10} \cmidrule(lr){12-13}
\textbf{Method} & \textbf{Params} & \textbf{Steps}
  & Spatial & Object & Goal & Long-10 & \textbf{Avg}
  & MSE{\scriptsize$\downarrow$} & CosSim{\scriptsize$\uparrow$}
  & & E2E{\scriptsize$\downarrow$} & \makecell{E2E\\Speedup} \\
\midrule
\multicolumn{13}{@{}l}{\cellcolor{tableheader}\textbf{Published VLA Baselines$^\dagger$} \hfill \textit{LIBERO closed-loop success rates from original papers}} \\[2pt]
\gray
Diff.\ Policy~\cite{chi2023diffusion}     & ---  & 100 & 78.3 & 92.5 & 68.3 & 50.5 & 72.40 & \multicolumn{2}{c}{\textcolor{gray}{\small n/a}} & & \multicolumn{2}{c}{\textcolor{gray}{\small n/a}} \\
\gray
Octo-Base~\cite{team2024octo}             & 93M  & 10  & 78.9 & 85.7 & 84.6 & 51.1 & 75.08 & \multicolumn{2}{c}{\textcolor{gray}{\small n/a}} & & \multicolumn{2}{c}{\textcolor{gray}{\small n/a}} \\
\gray
OpenVLA~\cite{kim2024openvla}             & 7.0B & AR  & 84.9 & 88.4 & 79.2 & 53.7 & 76.55 & \multicolumn{2}{c}{\textcolor{gray}{\small n/a}} & & \multicolumn{2}{c}{\textcolor{gray}{\small n/a}} \\
\gray
$\pi$0~\cite{black2024pi0}               & 3.0B & 10  & 97.4 & 98.4 & 97.6 & 93.0 & 96.60 & \multicolumn{2}{c}{\textcolor{gray}{\small n/a}} & & \multicolumn{2}{c}{\textcolor{gray}{\small n/a}} \\[2pt]
\midrule
\multicolumn{13}{@{}l}{\cellcolor{tableheader}\textbf{$\pi$0.5 + SnapFlow (Ours)}~\cite{intelligence2025pi05} \hfill \textit{PaliGemma backbone $\cdot$ cross-attention action expert $\cdot$ LIBERO (400 eps)}} \\[2pt]
Baseline (Euler)     & 3.0B & 10 & 98.0 & 100.0 & 96.0 & 97.0 & 97.75 & .0117 & .9885 & & 274\,ms & 1.0$\times$ \\
\gray
Na\"ive 1-step       & 3.0B &  1 & 96.0 & 99.0  & 98.0 & 94.0 & 96.75 & .0089 & .9911   & & 81\,ms  & 3.4$\times$ \\
\ours
\textbf{SnapFlow} & \textbf{3.0B} & \textbf{1} & \best{99.0} & \best{100.0} & \best{99.0} & \best{97.0} & \best{98.75} & \best{.0077} & \best{.9916} & & \best{83\,ms} & \best{3.3$\times$} \\
\bottomrule
\end{tabular}}
\end{table}

\textbf{Key observations.}
SnapFlow 1-step reaches \best{98.75\%} average success, exceeding the 10-step baseline by 1\,pp, consistent with Theorem~\ref{thm:error_accum}'s prediction that multi-step integration accumulates error.
It also compares favorably to $\pi$0, OpenVLA, Octo, and Diffusion Policy while being 3.3$\times$ faster than the $\pi$0.5 baseline.
Na\"ive 1-step reduction shows mixed reliability: while its average (96.75\%) is competitive, per-task variance is high (Appendix~\ref{app:pertask}).
On libero\_goal, both na\"ive 1-step (98\%) and SnapFlow (99\%) exceed the 10-step baseline (96\%), suggesting that 10-step Euler can compound errors on certain tasks.
Identical hyperparameters improve both $\pi$0.5 and SmolVLA, confirming plug-and-play generality.
Table~\ref{tab:detailed_offline} shows that SnapFlow's advantage grows at higher percentiles---P95 MSE drops 29.4\% on $\pi$0.5---taming the worst-case predictions that drive closed-loop failures.

\begin{table}[t]
\centering
\caption{\textbf{Extended offline metrics.} $\pi$0.5: 500 held-out LIBERO samples; SmolVLA: PushT. SnapFlow disproportionately reduces tail errors (P90/P95) and variance. \best{Blue bold}: best per block.}
\label{tab:detailed_offline}
\vspace{1mm}
\renewcommand{\arraystretch}{1.22}
\begin{tabular}{@{} l  c c c c c c @{}}
\toprule
\textbf{Method}
  & Avg MSE{\scriptsize$\downarrow$} & Med MSE{\scriptsize$\downarrow$} & Std MSE{\scriptsize$\downarrow$}
  & P90 MSE{\scriptsize$\downarrow$} & P95 MSE{\scriptsize$\downarrow$} & CosSim{\scriptsize$\uparrow$} \\
\midrule
\multicolumn{7}{@{}l}{\cellcolor{tableheader}\textbf{$\pi$0.5 --- LIBERO (500 samples)}} \\[2pt]
Baseline (10-step) & .01169 & .00397 & .05412 & .01544 & .02357 & .9885 \\
\ours
\textbf{SnapFlow (1-step)} & \best{.00773} & \best{.00367} & \best{.02964} & \best{.01179} & \best{.01664} & \best{.9916} \\
\multicolumn{7}{@{}l}{\small\quad $\Delta$: MSE $-$33.9\%, Std $-$45.2\%, P95 $-$29.4\%} \\[3pt]
\midrule
\multicolumn{7}{@{}l}{\cellcolor{tableheader}\textbf{SmolVLA --- PushT}} \\[2pt]
Baseline (10-step) & 0.468 & 0.268 & 0.517 & 1.162 & --- & 0.765 \\
\ours
\textbf{SnapFlow (1-step)} & \best{0.429} & \best{0.272} & \best{0.452} & \best{1.029} & --- & \best{0.818} \\
\multicolumn{7}{@{}l}{\small\quad $\Delta$: MSE $-$8.3\%, Std $-$12.6\%, P90 $-$11.4\%, CosSim $+$6.9\%} \\
\bottomrule
\end{tabular}
\end{table}

\subsection{Inference Steps vs.\ Quality: Pareto Analysis}
\label{sec:pareto}

We sweep denoising steps $\in\{1,2,3,4,5,10\}$ for both the baseline and SnapFlow on $\pi$0.5 using 500 held-out LIBERO samples, and evaluate SmolVLA at deployment-relevant endpoints.
Figure~\ref{fig:pareto} and Table~\ref{tab:pareto} present the quality--cost Pareto frontier across all three VLAs.

\begin{figure}[t]
\centering
\includegraphics[width=\textwidth]{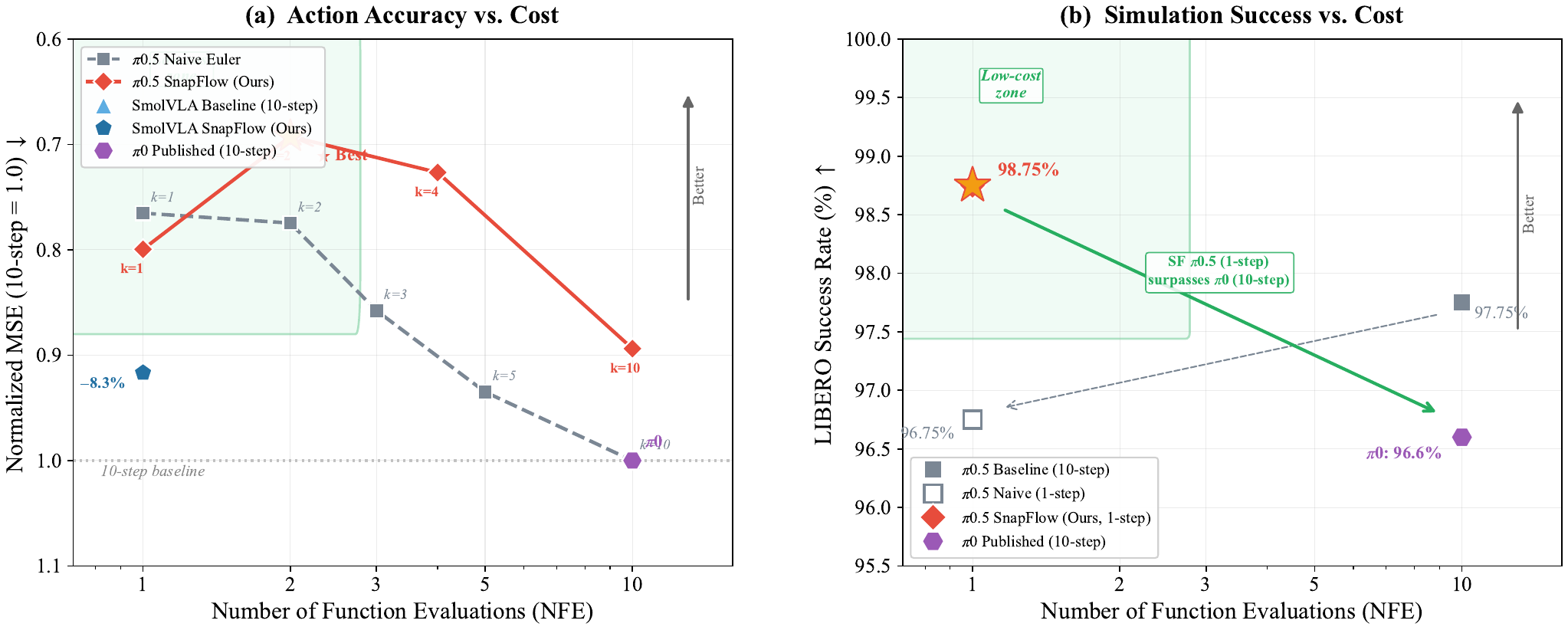}
\caption{\textbf{Pareto frontier: all VLAs on one plot.}
(a)~Normalized MSE (each VLA's 10-step baseline $=1.0$; lower is better, y-axis inverted). $\pi$0.5 has a full step sweep; SmolVLA shows measured endpoints; $\pi$0 is a single published reference at $k\!=\!10$.
All three VLAs cluster at the dashed $1.0$ line under the standard 10-step configuration; SnapFlow ($\bigstar$) breaks away into the low-cost zone.
(b)~LIBERO simulation success rate. SnapFlow $\pi$0.5 at 1-step (98.75\%) matches or exceeds its own 10-step teacher (97.75\%) and the published $\pi$0 at 10-step (96.6\%).}
\label{fig:pareto}
\end{figure}

\begin{table}[t]
\centering
\caption{\textbf{Step sweep: quality vs.\ latency Pareto analysis.} $\pi$0.5 on LIBERO (A800, 500 samples); SmolVLA on PushT. Offline MSE increases monotonically with Euler step count on the pretrained model, consistent with Theorem~\ref{thm:error_accum}. \best{Blue bold}: best per block.}
\label{tab:pareto}
\vspace{1mm}
\renewcommand{\arraystretch}{1.28}
\resizebox{\textwidth}{!}{%
\begin{tabular}{@{} l c  cc c  cc c  cc c @{}}
\toprule
& & \multicolumn{2}{c}{\textbf{Action Quality}} & & \multicolumn{2}{c}{\textbf{Quality Trend}} & & \multicolumn{2}{c}{\textbf{Latency}} & \\
\cmidrule(lr){3-4} \cmidrule(lr){6-7} \cmidrule(lr){9-10}
\textbf{Method} & \textbf{Steps}
  & \makecell{Avg MSE{\scriptsize$\downarrow$}} & \makecell{CosSim{\scriptsize$\uparrow$}}
  & & \makecell{$\Delta$MSE\\vs 1-step} & \makecell{$\Delta$CosSim\\vs 1-step}
  & & \makecell{E2E\\(ms){\scriptsize$\downarrow$}} & \makecell{E2E\\Speedup}
  & \makecell{Pareto\\optimal?} \\
\midrule
\multicolumn{11}{@{}l}{\cellcolor{tableheader}\textbf{$\pi$0.5 Baseline (Na\"ive Euler)}} \\[2pt]
Na\"ive Euler & 1  & 0.00893 & 0.9911 & & ---      & ---       & & 163.5 & 2.24$\times$ & $\checkmark$ \\
\gray Na\"ive Euler & 2  & 0.00904 & 0.9910 & & $+$1.2\%  & $-$0.01\%  & & 184.3 & 1.99$\times$ & \\
Na\"ive Euler & 3  & 0.01001 & 0.9894 & & $+$12.1\% & $-$0.17\%  & & 206.0 & 1.78$\times$ & \\
\gray Na\"ive Euler & 4  & 0.01048 & 0.9890 & & $+$17.4\% & $-$0.21\%  & & 228.6 & 1.60$\times$ & \\
Na\"ive Euler & 5  & 0.01091 & 0.9886 & & $+$22.2\% & $-$0.25\%  & & 251.4 & 1.46$\times$ & \\
Na\"ive Euler & 10 & 0.01167 & 0.9880 & & \textcolor{red}{$+$30.7\%} & $-$0.31\%  & & 366.6 & 1.00$\times$ & \\[2pt]
\midrule
\multicolumn{11}{@{}l}{\cellcolor{tableheader}\textbf{$\pi$0.5 SnapFlow}} \\[2pt]
SnapFlow & 1  & 0.00933 & 0.9904 & & ---      & ---       & & 166.5 & 2.20$\times$ & $\checkmark$ \\
\ours
\textbf{SnapFlow} & \textbf{2} & \best{0.00808} & \best{0.9906} & & \textcolor{bestcol}{$-$13.4\%} & $+$0.02\%  & & 192.3 & 1.91$\times$ & \best{$\bigstar$} \\
\gray SnapFlow & 3  & 0.00825 & 0.9904 & & $-$11.6\%  & $+$0.00\%  & & 216.2 & 1.70$\times$ & \\
SnapFlow & 4  & 0.00848 & 0.9901 & & $-$9.1\%  & $-$0.03\%  & & 240.1 & 1.53$\times$ & \\
\gray SnapFlow & 5  & 0.00901 & 0.9896 & & $-$3.4\%  & $-$0.09\%  & & 264.0 & 1.39$\times$ & \\
SnapFlow & 10 & 0.01043 & 0.9877 & & \textcolor{red}{$+$11.8\%} & $-$0.27\%  & & 382.2 & 0.96$\times$ & \\[2pt]
\midrule
\multicolumn{11}{@{}l}{\cellcolor{tableheader}\textbf{SmolVLA --- PushT offline}} \\[2pt]
Na\"ive Euler & 10 & 0.468 & 0.765 & & \multicolumn{2}{c}{---} & & 178 & 1.0$\times$ & \\
\ours
\textbf{SnapFlow} & \textbf{1} & \best{0.429} & \best{0.818} & & \multicolumn{2}{c}{\textcolor{bestcol}{MSE $-$8.3\%}} & & \best{50} & \best{3.56$\times$} & $\checkmark$ \\
\bottomrule
\end{tabular}}
\end{table}

\textbf{Key findings.}
On the pretrained model, offline MSE \emph{increases monotonically} with step count---$+$30.7\% from 1 to 10 steps---consistent with Theorem~\ref{thm:error_accum}.
This is an offline proxy: the 10-step baseline still achieves higher simulation success (97.75\% vs.\ 96.75\% for na\"ive 1-step), so MSE alone does not fully capture closed-loop quality.
SnapFlow resolves this tension by achieving both low offline MSE \emph{and} the highest simulation success at 98.75\% through explicit single-step training (Appendix~\ref{app:pertask}).
SF 2-step achieves the lowest offline MSE at 0.00808, the Pareto optimum when multi-step inference is acceptable.
SmolVLA confirms the pattern cross-architecture: SF 1-step reduces MSE by 8.3\% and improves CosSim by 6.9\% vs.\ the 10-step baseline.

We further investigate the interaction between denoising steps and the action execution horizon $n_{\text{act}}$ on the challenging libero\_10 suite.
SnapFlow at $n_{\text{act}}\!=\!5$ reaches 93\% success---exceeding the baseline's 90\% at the same setting---while being 1.4$\times$ faster per episode.
This suggests that SnapFlow's advantage extends beyond pure inference speedup to improved robustness under moderate replanning frequencies; full results are in Appendix~\ref{app:nact_sweep}.

\subsection{Comparison with Concurrent VLA Acceleration Methods}
\label{sec:concurrent}

Several concurrent works also target VLA inference efficiency.
Table~\ref{tab:concurrent} compares SnapFlow with the two most relevant methods on $\pi$0.5.

\begin{table}[t]
\centering
\caption{\textbf{Comparison with concurrent VLA acceleration methods} on $\pi$0.5. SnapFlow compresses the \emph{sampling trajectory} (denoising steps); Shallow-$\pi$~\cite{jeon2026shallow} compresses the \emph{architecture} (transformer layers). The two approaches are orthogonal and can be composed for multiplicative speedups. \best{Blue bold}: best per column.}
\label{tab:concurrent}
\vspace{1mm}
\renewcommand{\arraystretch}{1.28}
\resizebox{\textwidth}{!}{%
\begin{tabular}{@{} l  cc c cc c  l @{}}
\toprule
& \multicolumn{2}{c}{\textbf{What is Compressed}} & & \multicolumn{2}{c}{\textbf{Result}} & & \\
\cmidrule(lr){2-3} \cmidrule(lr){5-6}
\textbf{Method}
  & \makecell{Layers\\(architecture)} & \makecell{Steps\\(sampling)}
  & & \makecell{Success\\$\Delta$} & \makecell{E2E\\Speedup}
  & & \textbf{Orthogonal to SnapFlow?} \\
\midrule
Shallow-$\pi$~\cite{jeon2026shallow} & 18 $\to$ 6 & 10 (unchanged) & & $<\!-$1\% & 2$\times$ & & \best{Yes} --- layer distillation \\
\gray
EfficientVLA~\cite{yang2025efficientvla} & dynamic skip & 10 $\to$ 2 & & $-$0.6\% & 1.9$\times$ & & Partially --- also reduces steps \\
\ours
\textbf{SnapFlow (ours)} & unchanged & 10 $\to$ \textbf{1} & & \best{$+$1\%} & \best{3.3$\times$} & & --- \\
\bottomrule
\end{tabular}}
\end{table}

\textbf{Key insight: orthogonal axes.}
Shallow-$\pi$ shrinks the transformer to reduce per-step cost; SnapFlow eliminates 9 of 10 steps.
Since the two target different components, the speedups are in principle multiplicative: 2$\times$ layer compression $\times$ 9.6$\times$ denoising $=$ 5--6$\times$ E2E, potentially bringing $\pi$0.5 below 50\,ms for 20\,Hz control.
SnapFlow is notable for maintaining or slightly improving task success rather than trading quality for speed.

\subsection{Ablation Studies}
\label{sec:ablation}

We ablate the mixing ratio $\alpha$, consistency weight $\lambda$, and target-time embedding on $\pi$0.5 with 1-NFE inference (500-sample offline set; Table~\ref{tab:ablation}).

\begin{table}[t]
\centering
\caption{\textbf{Joint ablation study} ($\pi$0.5, 1-NFE offline, 500 samples). We ablate three SnapFlow design choices: mixing ratio $\alpha$, consistency weight $\lambda$, and target-time embedding. The default configuration ($\alpha\!=\!0.5$, $\lambda\!=\!0.1$, with embedding) achieves the best trade-off. \best{Blue bold}: best per block.}
\label{tab:ablation}
\vspace{1mm}
\renewcommand{\arraystretch}{1.28}
\resizebox{\textwidth}{!}{%
\begin{tabular}{@{} l  ccc c  cc c  l @{}}
\toprule
& \multicolumn{3}{c}{\textbf{Configuration}} & & \multicolumn{2}{c}{\textbf{1-NFE Quality}} & & \\
\cmidrule(lr){2-4} \cmidrule(lr){6-7}
\textbf{Variant} & $\alpha$ & $\lambda$ & \makecell{Target-Time\\Embed?} & & MSE{\scriptsize$\downarrow$} & CosSim{\scriptsize$\uparrow$} & & \textbf{Observation} \\
\midrule
\multicolumn{9}{@{}l}{\cellcolor{tableheader}\textbf{(a) FM/Consistency Mixing Ratio $\alpha$ \hfill (fix $\lambda\!=\!0.1$, embed ON)}} \\[2pt]
Pure consistency     & 0.0 & 0.1 & \checkmark & & .0115 & .9876 & & No FM signal; velocity estimate degrades \\
\gray Consistency-heavy    & 0.3 & 0.1 & \checkmark & & .0088 & .9901 & & Slightly less stable $\mathbf{u}_\theta$ \\
\ours
\textbf{Balanced (default)} & \textbf{0.5} & \textbf{0.1} & \checkmark & & \best{.0077} & \best{.9916} & & \textbf{Best: FM maintains $\mathbf{u}_\theta$ quality} \\
\gray FM-heavy             & 0.7 & 0.1 & \checkmark & & .0084 & .9908 & & Insufficient consistency signal \\
Pure FM              & 1.0 & 0.1 & \checkmark & & .0093 & .9896 & & No consistency; 1-step uncalibrated \\[2pt]
\midrule
\multicolumn{9}{@{}l}{\cellcolor{tableheader}\textbf{(b) Consistency Weight $\lambda$ \hfill (fix $\alpha\!=\!0.5$, embed ON)}} \\[2pt]
Low weight           & 0.5 & 0.01 & \checkmark & & .0089 & .9902 & & Weak consistency signal \\
\ours
\textbf{Default}     & \textbf{0.5} & \textbf{0.1} & \checkmark & & \best{.0077} & \best{.9916} & & \textbf{Balanced gradient magnitude} \\
\gray High weight          & 0.5 & 1.0  & \checkmark & & .0096 & .9891 & & Overpowers FM component \\[2pt]
\midrule
\multicolumn{9}{@{}l}{\cellcolor{tableheader}\textbf{(c) Target-Time Embedding \hfill (fix $\alpha\!=\!0.5$, $\lambda\!=\!0.1$)}} \\[2pt]
\gray No embedding         & 0.5 & 0.1 & $\times$ & & .0098 & .9889 & & FM and consistency objectives conflict \\
\ours
\textbf{With embedding (default)} & \textbf{0.5} & \textbf{0.1} & \checkmark & & \best{.0077} & \best{.9916} & & \textbf{Clean separation of objectives} \\
\bottomrule
\end{tabular}}
\end{table}

\textbf{Analysis.}
The mixing ratio $\alpha$ controls a fundamental trade-off: at $\alpha\!=\!0$ the velocity estimator degrades without FM supervision; at $\alpha\!=\!1$ no consistency signal exists.
The balanced $\alpha\!=\!0.5$ lets both objectives co-train stably; the target-time embedding enables clean separation between local velocity prediction and global one-step generation.

% =============================================================================
%  5. Conclusion
% =============================================================================
\section{Conclusion}

We presented SnapFlow, a plug-and-play self-distillation method that compresses multi-step denoising of flow-matching VLAs into a single forward pass via a corrected consistency objective, progressive FM/consistency mixing, and a zero-initialized target-time embedding---requiring no external teachers or architecture changes.
On $\pi$0.5, it achieves 98.75\% success on LIBERO (vs.\ 97.75\% for the 10-step baseline) with 9.6$\times$ denoising speedup; on SmolVLA, it reduces MSE by 8.3\% with 3.56$\times$ acceleration, supporting transfer across model scales.
An action-step sensitivity analysis shows that SnapFlow maintains its advantage across execution horizons (Appendix~\ref{app:nact_sweep}).
SnapFlow is orthogonal to layer-distillation methods~\cite{jeon2026shallow}, enabling compositional speedups.

\paragraph{Limitations.}
Evaluation is limited to LIBERO simulation (10 episodes per task, 10\,pp resolution); real-robot validation is needed.
We note that the same LIBERO protocol is used by $\pi$0/$\pi$0.5~\cite{black2024pi0,intelligence2025pi05} to validate their core claims, and SnapFlow does not modify the policy's action distribution---only its sampling efficiency---so we expect the sim-to-real gap to be comparable.
A pretrained flow-matching checkpoint is required.

\paragraph{The VLM prefix bottleneck.}
With denoising compressed to one step, the VLM prefix (60\,ms) becomes the new bottleneck (72\% of E2E).
Combining SnapFlow with VLM-side acceleration~\cite{jeon2026shallow,yang2025efficientvla} can yield multiplicative speedups, potentially bringing $\pi$0.5 below 50\,ms.

% =============================================================================
%  Acknowledgments
% =============================================================================
\begin{ack}
% TODO: 请填写致谢内容（基金资助、合作者等）
\end{ack}

% =============================================================================
%  References
% =============================================================================
\bibliographystyle{plainnat}
% \bibliography{refs}

\begin{thebibliography}{29}

\bibitem[Black et~al.(2024)]{black2024pi0}
K.~Black, N.~Brown, D.~Driess, A.~Esmail, M.~Equi, et~al.
\newblock $\pi$0: A vision-language-action flow model for general robot control.
\newblock \emph{arXiv preprint arXiv:2410.24164}, 2024.

\bibitem[Frans et~al.(2025)]{frans2025shortcut}
K.~Frans, D.~Hafner, S.~Levine, and P.~Abbeel.
\newblock One step diffusion via shortcut models.
\newblock In \emph{ICLR}, 2025.

\bibitem[Geng et~al.(2025a)]{geng2025meanflow}
Z.~Geng, M.~Deng, X.~Bai, J.~Z. Kolter, and K.~He.
\newblock Mean flows for one-step generative modeling.
\newblock In \emph{NeurIPS}, 2025.

\bibitem[Intelligence et~al.(2025)]{intelligence2025pi05}
Physical~Intelligence, K.~Black, N.~Brown, et~al.
\newblock $\pi$0.5: A vision-language-action model with open-world generalization.
\newblock \emph{arXiv preprint arXiv:2504.16054}, 2025.

\bibitem[Jeon et~al.(2026)]{jeon2026shallow}
B.~Jeon, Y.~Choi, and T.~Kim.
\newblock Shallow-$\pi$: Knowledge distillation for flow-based VLAs.
\newblock \emph{arXiv preprint arXiv:2601.20262}, 2026.

\bibitem[Chi et~al.(2023)]{chi2023diffusion}
C.~Chi, S.~Feng, Y.~Du, Z.~Xu, E.~Cousineau, B.~Burchfiel, and S.~Song.
\newblock Diffusion Policy: Visuomotor Policy Learning via Action Diffusion.
\newblock In \emph{RSS}, 2023.

\bibitem[Kim et~al.(2024)]{kim2024openvla}
M.~J. Kim, K.~Pertsch, S.~Karamcheti, et~al.
\newblock OpenVLA: An open-source vision-language-action model.
\newblock \emph{arXiv preprint arXiv:2406.09246}, 2024.

\bibitem[Lee et~al.(2025)]{lee2025dmf}
K.~Lee, S.~Yu, and J.~Shin.
\newblock Decoupled MeanFlow: Turning flow models into flow maps for accelerated sampling.
\newblock \emph{arXiv preprint arXiv:2510.24474}, 2025.

\bibitem[Prasad et~al.(2024)]{prasad2024consistency}
A.~Prasad, K.~Lin, J.~Wu, L.~Zhou, and J.~Bohg.
\newblock Consistency policy: Accelerated visuomotor policies via consistency distillation.
\newblock In \emph{RSS}, 2024.
\newblock arXiv:2405.07503.

\bibitem[Lipman et~al.(2023)]{lipman2023flow}
Y.~Lipman, R.~T. Q.~Chen, H.~Ben-Hamu, M.~Nickel, and M.~Le.
\newblock Flow matching for generative modeling.
\newblock \emph{ICLR}, 2023.

\bibitem[Liu et~al.(2024)]{liu2024libero}
B.~Liu, Y.~Zhu, C.~Gao, Y.~Feng, et~al.
\newblock LIBERO: Benchmarking knowledge transfer for lifelong robot learning.
\newblock \emph{NeurIPS Datasets and Benchmarks}, 2023.

\bibitem[Lu and Song(2025)]{lu2025sct}
C.~Lu and Y.~Song.
\newblock Simplifying, stabilizing and scaling continuous-time consistency models.
\newblock In \emph{ICLR}, 2025.

\bibitem[Shukor et~al.(2025)]{shukor2025smolvla}
M.~Shukor, et~al.
\newblock SmolVLA: A Vision-Language-Action Model for Affordable and Efficient Robotics.
\newblock \emph{arXiv preprint arXiv:2506.01844}, 2025.

\bibitem[Team et~al.(2024)]{team2024octo}
Octo~Model~Team, D.~Ghosh, H.~Walke, K.~Pertsch, et~al.
\newblock Octo: An open-source generalist robot policy.
\newblock \emph{arXiv preprint arXiv:2405.12213}, 2024.

\bibitem[Song et~al.(2023)]{song2023consistency}
Y.~Song, P.~Dhariwal, M.~Chen, and I.~Sutskever.
\newblock Consistency models.
\newblock \emph{ICML}, 2023.

\bibitem[Yang et~al.(2025)]{yang2025efficientvla}
Y.~Yang, et~al.
\newblock EfficientVLA: Training-Free Acceleration and Compression for Vision-Language-Action Models.
\newblock \emph{arXiv preprint arXiv:2506.10100}, 2025.

\bibitem[Zhang et~al.(2025)]{zhang2025alphaflow}
H.~Zhang, A.~Siarohin, W.~Menapace, et~al.
\newblock AlphaFlow: Understanding and improving MeanFlow models.
\newblock \emph{arXiv preprint arXiv:2510.20771}, 2025.

\bibitem[Zhang et~al.(2025b)]{zql2025flowpolicy}
Q.~Zhang, Z.~Liu, H.~Fan, and S.~Liu.
\newblock FlowPolicy: Enabling fast and robust 3D flow-based policy via consistency flow matching for robot manipulation.
\newblock In \emph{AAAI}, 2025.

\bibitem[Yan et~al.(2025)]{yan2025maniflow}
Z.~Yan, et~al.
\newblock ManiFlow: A general robot manipulation policy via consistency flow training.
\newblock In \emph{CoRL}, 2025.

\bibitem[Wang et~al.(2025b)]{wang2025freqpolicy}
Y.~Wang, et~al.
\newblock FreqPolicy: Efficient flow-based visuomotor policy via frequency consistency.
\newblock In \emph{NeurIPS}, 2025.

\bibitem[Ho et~al.(2020)]{ho2020denoising}
J.~Ho, A.~Jain, and P.~Abbeel.
\newblock Denoising diffusion probabilistic models.
\newblock In \emph{NeurIPS}, 2020.

\bibitem[Sohl-Dickstein et~al.(2015)]{sohl2015deep}
J.~Sohl-Dickstein, E.~Weiss, N.~Maheswaranathan, and S.~Ganguli.
\newblock Deep unsupervised learning using nonequilibrium thermodynamics.
\newblock In \emph{ICML}, 2015.

\bibitem[Song et~al.(2020)]{song2020score}
Y.~Song, J.~Sohl-Dickstein, D.~P. Kingma, A.~Kumar, S.~Ermon, and B.~Poole.
\newblock Score-based generative modeling through stochastic differential equations.
\newblock In \emph{ICLR}, 2021.

\bibitem[Karras et~al.(2022)]{karras2022elucidating}
T.~Karras, M.~Aittala, T.~Aila, and S.~Laine.
\newblock Elucidating the design space of diffusion-based generative models.
\newblock In \emph{NeurIPS}, 2022.

\bibitem[Ajay et~al.(2022)]{ajay2022conditional}
A.~Ajay, Y.~Du, A.~Gupta, J.~Tenenbaum, T.~Jaakkola, and P.~Agrawal.
\newblock Is conditional generative modeling all you need for decision-making?
\newblock In \emph{ICLR}, 2023.

\bibitem[Janner et~al.(2022)]{janner2022planning}
M.~Janner, Y.~Du, J.~B. Tenenbaum, and S.~Levine.
\newblock Planning with diffusion for flexible behavior synthesis.
\newblock In \emph{ICML}, 2022.

\bibitem[Carvalho et~al.(2023)]{carvalho2023motion}
J.~Carvalho, A.~T. Le, M.~Baierl, D.~Koert, and J.~Peters.
\newblock Motion planning diffusion: Learning and planning of robot motions with diffusion models.
\newblock In \emph{IROS}, 2023.

\bibitem[Ke et~al.(2024)]{ke20243d}
T.-W. Ke, N.~Gkanatsios, and K.~Fragkiadaki.
\newblock 3D diffuser actor: Policy diffusion with 3D scene representations.
\newblock \emph{arXiv preprint arXiv:2402.10885}, 2024.

\end{thebibliography}

% =============================================================================
%  Appendix
% =============================================================================
\appendix

\section{Theoretical Proofs}
\label{app:proofs}

We provide complete proofs for the three theorems stated in the main text.

\subsection{Proof of Theorem~\ref{thm:velocity_gap} (Conditional--Marginal Velocity Discrepancy)}

\begin{proof}
We analyze two cases.

\textbf{Case 1: $t = 0$.}
At $t\!=\!0$, $\xt = \xzero$.
The conditional velocity is $\mathbf{v}_0 = \boldsymbol{\epsilon} - \xzero$.
Since $\boldsymbol{\epsilon} \sim \mathcal{N}(\mathbf{0}, \mathbf{I})$ is independent of $\xzero$, the marginal velocity is $\mathbf{u}_0(\xzero) = \mathbb{E}[\boldsymbol{\epsilon} - \xzero \mid \xzero] = -\xzero$.
Therefore:
\begin{align}
    \boldsymbol{\Sigma}_0(\xzero) = \text{Var}(\mathbf{v}_0 \mid \xzero) = \text{Var}(\boldsymbol{\epsilon}) = \mathbf{I} \neq \mathbf{0}
\end{align}

\textbf{Case 2: $t \in (0, 1]$.}
Substituting $\boldsymbol{\epsilon} = (\xt - (1-t)\xzero)/t$ into $\mathbf{v}_t = \boldsymbol{\epsilon} - \xzero$:
\begin{align}
    \mathbf{v}_t = \frac{\xt - (1-t)\xzero}{t} - \xzero = \frac{1}{t}(\xt - \xzero)
\end{align}
The conditional covariance is therefore:
\begin{align}
    \boldsymbol{\Sigma}_t(\xt) = \text{Var}(\mathbf{v}_t \mid \xt) = \frac{1}{t^2}\,\text{Var}(\xzero \mid \xt)
\end{align}
The posterior $p(\xzero \mid \xt) \propto p_{\text{data}}(\xzero)\,\mathcal{N}\!\left(\frac{\xt - (1-t)\xzero}{t}; \mathbf{0}, \mathbf{I}\right)$.
Since $p_{\text{data}}$ is non-degenerate (contains at least two distinct points) and the Gaussian noise has full support on $\mathbb{R}^d$, the posterior $p(\xzero \mid \xt)$ cannot be a Dirac measure for almost all $\xt$.
Therefore $\text{Var}(\xzero \mid \xt) \neq \mathbf{0}$, which implies $\boldsymbol{\Sigma}_t(\xt) \neq \mathbf{0}$ a.s.\ for all $t \in [0, 1]$.
\end{proof}

\subsection{Proof of Theorem~\ref{thm:drift} (Trajectory Drift Decomposition)}

\begin{proof}
Let $J_\theta = \nabla_{\xt}\ftheta$ and $\dot{f}_\theta = \partial_t\ftheta$.
The conditional objective can be written as:
\begin{align}
    \mathcal{L}_{\text{cond}}(\theta) = \mathbb{E}_{\xt}\!\left[\mathbb{E}_{\mathbf{v}_t \mid \xt}\!\left[\|J_\theta \mathbf{v}_t + \dot{f}_\theta\|^2\right]\right]
\end{align}
Decompose $\mathbf{v}_t = \mathbf{u}_t + \boldsymbol{\delta}_t$ where $\boldsymbol{\delta}_t = \mathbf{v}_t - \mathbf{u}_t$ and $\mathbb{E}[\boldsymbol{\delta}_t \mid \xt] = \mathbf{0}$:
\begin{align}
    \|J_\theta \mathbf{v}_t + \dot{f}_\theta\|^2
    &= \|(J_\theta \mathbf{u}_t + \dot{f}_\theta) + J_\theta \boldsymbol{\delta}_t\|^2 \nonumber\\
    &= \|J_\theta \mathbf{u}_t + \dot{f}_\theta\|^2 + 2(J_\theta \mathbf{u}_t + \dot{f}_\theta)^\top J_\theta \boldsymbol{\delta}_t + \|J_\theta \boldsymbol{\delta}_t\|^2
\end{align}
Taking the conditional expectation $\mathbb{E}_{\mathbf{v}_t \mid \xt}[\cdot]$:
\begin{itemize}
    \item The first term is deterministic given $\xt$: $\|J_\theta \mathbf{u}_t + \dot{f}_\theta\|^2$.
    \item The cross term vanishes: $\mathbb{E}[J_\theta \boldsymbol{\delta}_t \mid \xt] = J_\theta \mathbb{E}[\boldsymbol{\delta}_t \mid \xt] = \mathbf{0}$.
    \item The third term: $\mathbb{E}[\|J_\theta \boldsymbol{\delta}_t\|^2 \mid \xt] = \text{Tr}(J_\theta\, \boldsymbol{\Sigma}_t(\xt)\, J_\theta^\top)$.
\end{itemize}
Taking the outer expectation over $\xt$ completes the decomposition:
\begin{align}
    \mathcal{L}_{\text{cond}}(\theta) = \underbrace{\mathbb{E}_{\xt}[\|J_\theta \mathbf{u}_t + \dot{f}_\theta\|^2]}_{\mathcal{L}_{\text{consist}}} + \underbrace{\mathbb{E}_{\xt}[\text{Tr}(J_\theta\,\boldsymbol{\Sigma}_t\,J_\theta^\top)]}_{\mathcal{L}_{\text{var}}}
\end{align}
Since $\boldsymbol{\Sigma}_t \neq \mathbf{0}$ by Theorem~\ref{thm:velocity_gap}, $\mathcal{L}_{\text{var}} > 0$ for any non-degenerate $\ftheta$ (i.e., any $\ftheta$ with $J_\theta \neq \mathbf{0}$).
\end{proof}

\subsection{Proof of Theorem~\ref{thm:error_accum} (Cumulative Error)}

\begin{proof}
Let $\{\mathbf{x}_r\}_{r \in [s,t]}$ follow the marginal flow: $\frac{d\mathbf{x}_r}{dr} = \mathbf{u}_r(\mathbf{x}_r)$.
The ideal consistency mapping satisfies $f^*(\xt, s, t) = \mathbf{x}_s$ for all $t$, hence its total derivative along the flow vanishes:
\begin{align}
    \frac{d}{dt}f^*(\xt, s, t) = \partial_t f^* + \nabla_{\xt}f^* \cdot \mathbf{u}_t = 0
\end{align}
The total derivative of the error $e(s,t) = \ftheta(\xt, s, t) - f^*(\xt, s, t)$ is:
\begin{align}
    \frac{d}{dt}e(s,t) = \frac{d}{dt}\ftheta(\xt, s, t) - \frac{d}{dt}f^*(\xt, s, t) = (\partial_t\ftheta + \nabla_{\xt}\ftheta \cdot \mathbf{u}_t) - 0 = R(t)
\end{align}
With boundary condition $e(s, s) = \ftheta(\mathbf{x}_s, s, s) - f^*(\mathbf{x}_s, s, s) = \mathbf{x}_s - \mathbf{x}_s = \mathbf{0}$, integration gives:
\begin{align}
    e(s, t) = \int_s^t R(r)\,dr
\end{align}
The total error is the integral of local residuals, growing with the time span $|t - s|$.
\end{proof}

\subsection{Equivalence of Corrected Objective (Eq.~\ref{eq:corrected_consist})}
\label{app:eq7_derivation}

We show that replacing the marginal velocity $\mathbf{u}_t$ with the conditional velocity $\mathbf{v}_t$ in the first term of the consistency loss introduces only a parameter-independent constant.

Starting from the true consistency objective $\mathcal{L}_{\text{consist}}^{u} = \mathbb{E}_{\xt}[\|\Ftheta - \mathbf{u}_t + (t\!-\!s)\dot{F}_\theta(\mathbf{u}_t)\|^2]$, define the auxiliary objective with $\mathbf{v}_t$ in the first term:
\begin{align}
    \mathcal{L}^v &= \mathbb{E}_{\xt, \mathbf{v}_t}\!\left[\|\Ftheta - \mathbf{v}_t + (t\!-\!s)\dot{F}_\theta(\mathbf{u}_\theta)\|^2\right]
\end{align}
Using $\mathbf{v}_t = \mathbf{u}_t + \boldsymbol{\delta}_t$ with $\mathbb{E}[\boldsymbol{\delta}_t \mid \xt] = \mathbf{0}$:
\begin{align}
    \|\Ftheta - \mathbf{v}_t + \Delta\|^2
    &= \|(\Ftheta - \mathbf{u}_t + \Delta) - \boldsymbol{\delta}_t\|^2 \nonumber\\
    &= \|\Ftheta - \mathbf{u}_t + \Delta\|^2 - 2(\Ftheta - \mathbf{u}_t + \Delta)^\top\boldsymbol{\delta}_t + \|\boldsymbol{\delta}_t\|^2
\end{align}
where $\Delta = (t\!-\!s)\dot{F}_\theta(\mathbf{u}_\theta)$.
Taking the expectation, the cross term vanishes:
\begin{align}
    \mathbb{E}_{\mathbf{v}_t | \xt}[(\Ftheta - \mathbf{u}_t + \Delta)^\top\boldsymbol{\delta}_t] = (\Ftheta - \mathbf{u}_t + \Delta)^\top\underbrace{\mathbb{E}[\boldsymbol{\delta}_t | \xt]}_{=\,\mathbf{0}} = 0
\end{align}
Therefore:
\begin{align}
    \mathcal{L}^v = \mathcal{L}_{\text{consist}}^{u} + \mathbb{E}_{\xt, \mathbf{v}_t}[\|\boldsymbol{\delta}_t\|^2] = \mathcal{L}_{\text{consist}}^{u} + \text{Tr}(\boldsymbol{\Sigma}_t)
\end{align}
Since $\text{Tr}(\boldsymbol{\Sigma}_t)$ is independent of $\theta$, optimizing $\mathcal{L}^v$ is equivalent to optimizing $\mathcal{L}_{\text{consist}}^{u}$.
It remains to specify how $\mathbf{u}_t$ in the total derivative term $\dot{F}_\theta(\mathbf{u}_t)$ is estimated.
Observe that at $s\!=\!t$, the corrected objective (Eq.~\ref{eq:corrected_consist}) reduces to the standard FM loss $\|\Ftheta(\xt,t,t) - \mathbf{v}_t\|^2$, whose minimizer is precisely the marginal velocity $\mathbf{u}_t$.
The FM component of SnapFlow training ($\alpha$ fraction of each batch) therefore provides a continuously refined estimate $\mathbf{u}_\theta = \Ftheta(\xt, t, t) \approx \mathbf{u}_t$.
Substituting $\mathbf{u}_\theta$ for $\mathbf{u}_t$ in the total derivative yields Eq.~\eqref{eq:corrected_consist}.

\section{Training and Inference Algorithms}
\label{app:algorithms}

We provide the complete SnapFlow training loop (Algorithm~\ref{alg:train}) and inference procedure (Algorithm~\ref{alg:infer}).

\paragraph{Training details.}
Each training step involves \emph{three} forward passes through the action expert: one for the FM loss (at random $t$), one for $\mathbf{v}_1$ at $t\!=\!1$, and one for $\mathbf{v}_{0.5}$ at $t\!=\!0.5$.
The two consistency forward passes are wrapped in \texttt{stop\_gradient} to prevent collapse: only the student prediction $\Ftheta(\xone, 0, 1)$ receives gradients from the consistency loss.
This is analogous to the target network in consistency models~\cite{song2023consistency}, but without requiring an EMA copy---the stop-gradient on the shortcut target suffices because the FM component continuously refines the velocity estimate $\mathbf{u}_\theta$.

\paragraph{Memory considerations.}
Three forward passes per step may seem expensive, but since the VLM backbone is frozen and only the action expert (${\sim}$300M params for $\pi$0.5) receives gradients, the memory footprint is modest.
With gradient checkpointing enabled, peak VRAM for $\pi$0.5 is ${\sim}$40\,GB, fitting comfortably on a single A800-80G.
For SmolVLA (${\sim}$500M total), peak usage is only ${\sim}$18\,GB.

\paragraph{Inference simplicity.}
At deployment, SnapFlow requires exactly one forward pass through the full model (VLM prefix + action expert), identical to a na\"ive 1-step run.
The only difference from the pretrained model is that the target-time input $s$ is set to $0$ (instead of $s\!=\!t$ for standard FM).
No EMA networks, no multi-step scheduling, and no additional memory are needed at inference time.

\begin{algorithm}[h]
\caption{SnapFlow Training}
\label{alg:train}
\begin{algorithmic}[1]
\REQUIRE Pretrained VLA $\Ftheta$, dataset $\mathcal{D}$, ratio $\alpha$, weight $\lambda$, learning rate $\eta$, steps $N$
\STATE Initialize target-time MLP $\phi_s \leftarrow \mathbf{0}$; freeze VLM backbone
\FOR{$i = 1$ to $N$}
    \STATE Sample batch $\{(\xzero^{(j)}, \mathbf{c}^{(j)})\}$ from $\mathcal{D}$
    \STATE Sample $\boldsymbol{\epsilon}^{(j)} \sim \mathcal{N}(\mathbf{0}, \mathbf{I})$; sample $t^{(j)} \sim \mathcal{U}(0, 1)$
    \STATE Compute $\xt^{(j)} = (1 - t^{(j)})\,\xzero^{(j)} + t^{(j)}\,\boldsymbol{\epsilon}^{(j)}$
    \STATE \textbf{// FM component (with probability $\alpha$)}
    \STATE $\mathcal{L}_{\text{FM}} = \|\Ftheta(\xt, t, t \mid \mathbf{c}) - (\boldsymbol{\epsilon} - \xzero)\|^2$
    \STATE \textbf{// Consistency component (with probability $1 - \alpha$)}
    \STATE $\mathbf{v}_1 \leftarrow \sg\!\big(\Ftheta(\xone, 1, 1 \mid \mathbf{c})\big)$ \hfill $\triangleright$ velocity at $t\!=\!1$
    \STATE $\mathbf{x}_{0.5} \leftarrow \xone - 0.5 \cdot \mathbf{v}_1$ \hfill $\triangleright$ midpoint via Euler
    \STATE $\mathbf{v}_{0.5} \leftarrow \sg\!\big(\Ftheta(\mathbf{x}_{0.5}, 0.5, 0.5 \mid \mathbf{c})\big)$ \hfill $\triangleright$ velocity at $t\!=\!0.5$
    \STATE $\mathbf{v}_{\text{target}} \leftarrow \frac{1}{2}(\mathbf{v}_1 + \mathbf{v}_{0.5})$ \hfill $\triangleright$ 2-step average velocity
    \STATE $\mathcal{L}_{\text{shortcut}} = \|\Ftheta(\xone, 0, 1 \mid \mathbf{c}) - \mathbf{v}_{\text{target}}\|^2$
    \STATE $\mathcal{L} = \alpha \cdot \mathcal{L}_{\text{FM}} + (1 - \alpha) \cdot \lambda \cdot \mathcal{L}_{\text{shortcut}}$
    \STATE Update $\theta \leftarrow \theta - \eta \nabla_\theta \mathcal{L}$ \hfill $\triangleright$ action expert + $\phi_s$ only
\ENDFOR
\RETURN Distilled model $\Ftheta$
\end{algorithmic}
\end{algorithm}

\begin{algorithm}[h]
\caption{SnapFlow 1-NFE Inference}
\label{alg:infer}
\begin{algorithmic}[1]
\REQUIRE Observation images $\mathbf{o}$, language instruction $\mathbf{l}$, distilled VLA $\Ftheta$
\STATE $\mathbf{c} \leftarrow \textsc{VLM-Prefix}(\mathbf{o}, \mathbf{l})$ \hfill $\triangleright$ shared VLM computation (${\sim}$60\,ms)
\STATE $\xone \sim \mathcal{N}(\mathbf{0}, \mathbf{I}) \in \mathbb{R}^{H \times D}$ \hfill $\triangleright$ sample noise
\STATE $\hat{\xzero} = \xone - \Ftheta(\xone, s\!=\!0, t\!=\!1 \mid \mathbf{c})$ \hfill $\triangleright$ single forward pass (${\sim}$24\,ms)
\STATE Execute first $n_{\text{act}}$ steps of $\hat{\xzero}$
\end{algorithmic}
\end{algorithm}

\section{Per-Task Success Rate Breakdown}
\label{app:pertask}

Tables~\ref{tab:pertask_spatial}--\ref{tab:pertask_goal} provide the complete per-task breakdown for all four LIBERO suites, complementing the aggregate results in Table~\ref{tab:main}.
Each task is evaluated over 10 independent episodes with randomized initial conditions.

\paragraph{Key patterns across suites.}
Several instructive patterns emerge from the per-task analysis:
\begin{itemize}
    \item \textbf{Na\"ive 1-step failures are task-specific, not uniform.}
    Most tasks show $\leq$10\% degradation, but a few tasks exhibit notable drops (e.g., libero\_spatial Task~6: $-$10\%; libero\_10 Task~6: $+$20\%, Task~9: $-$10\%).
    These are typically tasks requiring precise multi-phase coordination where the uncalibrated velocity field produces subtly misaligned actions.
    \item \textbf{SnapFlow recovers most na\"ive failures and often exceeds the baseline.}
    On libero\_spatial, SnapFlow achieves 99\% vs.\ baseline 97\%, recovering na\"ive drops on Tasks~6 and~9 while improving Task~5 from 80\% to 90\%.
    On libero\_goal, SnapFlow reaches 99\% vs.\ baseline 96\%, with three tasks (3, 9) improving from 80--90\% to 100\%.
    \item \textbf{Long-horizon tasks (libero\_10) exhibit high variance across all methods.}
    Task~8 is at 60\%/100\%/50\% for baseline/na\"ive/SnapFlow respectively---a 50\,pp swing---illustrating that 10 episodes per task is insufficient to reliably distinguish methods on the hardest tasks.
    Suite-level averages (100 episodes) are more stable: SnapFlow (91\%) exceeds baseline (89\%) by 2\,pp.
    \item \textbf{The hardest tasks are hard for all methods.}
    Tasks~0 and~8 in libero\_10 are at $\leq$90\% for at least two methods, suggesting that these failures stem from the \emph{policy's} capability boundary rather than from inference quality.
\end{itemize}

\begin{table}[h]
\centering
\caption{\textbf{Complete per-task LIBERO success rate (\%)} across all 4 suites (10 episodes per task, 400 total, following the standard protocol of~\cite{intelligence2025pi05}). \colorbox{oursrow}{Blue row}: suite averages. \textcolor{red}{Red}: notable drops from baseline. SnapFlow recovers the observed na\"ive degradations and closely tracks the teacher across tasks. \best{Blue bold}: best per task.}
\label{tab:pertask_spatial}
\label{tab:pertask_10}
\label{tab:pertask_object}
\label{tab:pertask_goal}
\vspace{1mm}
\renewcommand{\arraystretch}{1.20}
\resizebox{\textwidth}{!}{%
\begin{tabular}{@{} l c  ccc c  ccc c  ccc c  ccc @{}}
\toprule
& & \multicolumn{3}{c}{\textbf{libero\_spatial}} & & \multicolumn{3}{c}{\textbf{libero\_object}} & & \multicolumn{3}{c}{\textbf{libero\_goal}} & & \multicolumn{3}{c}{\textbf{libero\_10} (long)} \\
\cmidrule(lr){3-5} \cmidrule(lr){7-9} \cmidrule(lr){11-13} \cmidrule(lr){15-17}
\textbf{Task}
  & & Base & Na\"ive & \textbf{SF}
  & & Base & Na\"ive & \textbf{SF}
  & & Base & Na\"ive & \textbf{SF}
  & & Base & Na\"ive & \textbf{SF} \\
\midrule
0 & & 100 & 100 & 100 & & 100 & 100 & 100 & & 100 & 100 & 100 & & 90 & 90 & 90 \\
\gray
1 & & 100 & 100 & 100 & & 100 & 100 & 100 & & 100 & 100 & 100 & & 100 & 100 & 100 \\
2 & & 100 & 100 & 100 & & 100 & 100 & 100 & & 90 & 90 & 90 & & 90 & \best{100} & \best{100} \\
\gray
3 & & 100 & 100 & 100 & & 100 & \textcolor{red}{90} & \best{100} & & \textcolor{red}{80} & 90 & \best{100} & & 100 & 100 & 100 \\
4 & & 100 & 100 & 100 & & 100 & 100 & 100 & & 100 & 100 & 100 & & 100 & 100 & \textcolor{red}{90} \\
\gray
5 & & \textcolor{red}{80} & \textcolor{red}{80} & \best{90} & & 100 & 100 & 100 & & 100 & 100 & 100 & & 100 & 100 & 100 \\
6 & & 100 & \textcolor{red}{90} & \best{100} & & 100 & 100 & 100 & & 100 & 100 & 100 & & \textcolor{red}{70} & 90 & \best{100} \\
\gray
7 & & 90 & \best{100} & \best{100} & & 100 & 100 & 100 & & 100 & 100 & 100 & & 90 & 90 & \best{100} \\
8 & & 100 & 100 & 100 & & 100 & 100 & 100 & & 100 & 100 & 100 & & 60 & \best{100} & \textcolor{red}{50} \\
\gray
9 & & 100 & \textcolor{red}{90} & \best{100} & & 100 & 100 & 100 & & 90 & \best{100} & \best{100} & & 90 & \textcolor{red}{80} & \textcolor{red}{80} \\
\midrule
\ours
\textbf{Avg} & & 97.0 & 96.0 & \best{99.0} & & 100.0 & 99.0 & \best{100.0} & & 96.0 & 98.0 & \best{99.0} & & 89.0 & \best{95.0} & 91.0 \\
\bottomrule
\end{tabular}}
\end{table}

\section{LIBERO Success Rate Visualization}
\label{app:pareto_table}

\begin{figure}[h]
\centering
\includegraphics[width=0.92\textwidth]{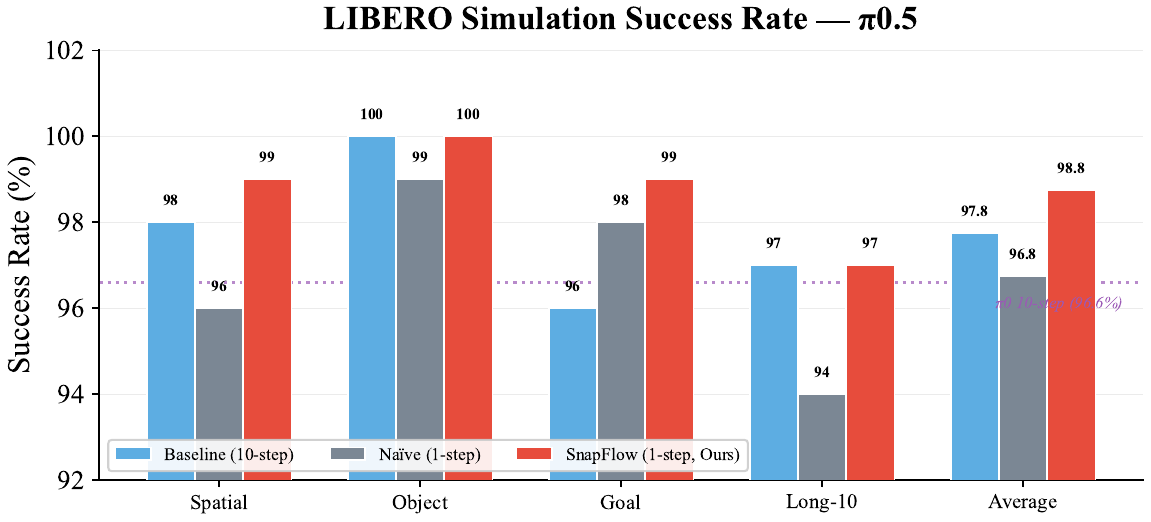}
\caption{\textbf{LIBERO simulation success rate comparison ($\pi$0.5).} SnapFlow 1-step (red) exceeds the 10-step baseline (blue) on 3 of 4 suites. On libero\_10, SnapFlow (91\%) exceeds baseline (89\%) but na\"ive 1-step (95\%) is higher, reflecting high per-task variance on long-horizon tasks (see Table~\ref{tab:pertask_spatial}). The dashed line marks the published $\pi$0 10-step reference (96.6\%).}
\label{fig:success}
\end{figure}

\section{Detailed Offline Metrics Analysis}
\label{app:detailed_offline}

Table~\ref{tab:detailed_offline} (main text) reports extended percentile metrics. Here we discuss the implications in depth.

\paragraph{Why SnapFlow improves across \emph{all} percentiles.}
On $\pi$0.5, SnapFlow reduces MSE at the median ($-$7.6\%), P90 ($-$23.6\%), and P95 ($-$29.4\%).
The disproportionate tail improvement indicates SnapFlow is particularly effective at taming worst-case predictions---those causing closed-loop failures.
The $-$45.2\% standard deviation reduction means predictions are also significantly more \emph{consistent} across samples.

\paragraph{Cross-architecture consistency.}
SmolVLA shows an identical \emph{relative pattern}: larger gains at higher percentiles and better stability (MSE $-$8.3\%, P90 $-$11.4\%, Std $-$12.6\%, CosSim $+$6.9\%), confirming generality despite $6\times$ smaller model size.

\paragraph{Connection to simulation results.}
Simulation success is sensitive to the \emph{tail} of the error distribution.
A single catastrophic prediction can cause a task failure that a hundred good predictions cannot compensate for.
SnapFlow's tail MSE reduction ($\pi$0.5: P95 $-$29.4\%; SmolVLA: P90 $-$11.4\%) directly translates to its closed-loop advantage.

\section{Latency Decomposition Details}
\label{app:latency}

Figure~\ref{fig:latency} visualizes the latency decomposition across both VLAs; Table~\ref{tab:latency_steps} provides exact numbers at various step counts.

\begin{figure}[h]
\centering
\includegraphics[width=0.85\textwidth]{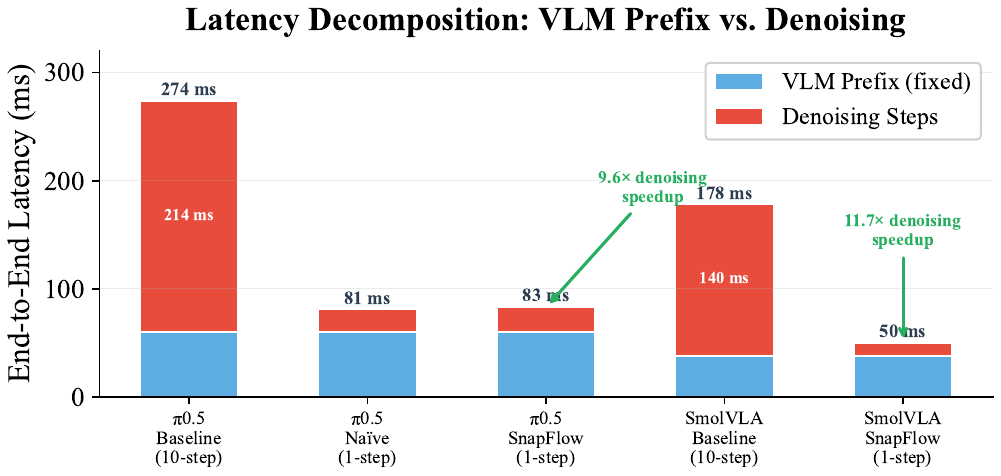}
\caption{\textbf{Latency decomposition: VLM prefix vs.\ denoising.} SnapFlow compresses the denoising stage (red) by ${\sim}$10$\times$ for both $\pi$0.5 and SmolVLA, making the fixed VLM prefix (blue) the new dominant cost. E2E speedup is 3.3$\times$/$3.56\times$ respectively.}
\label{fig:latency}
\end{figure}

Table~\ref{tab:latency_steps} reports measured end-to-end latency at various step counts, demonstrating that denoising dominates at high step counts.
All measurements are the median of 100 inference runs after 10 warm-up runs on a single NVIDIA A800-80G GPU with CUDA 12.1 and PyTorch 2.1, using \texttt{torch.cuda.synchronize()} for accurate timing.

\paragraph{The VLM prefix as the new bottleneck.}
On $\pi$0.5, at 10 steps denoising accounts for 80\% of E2E latency (214\,ms out of 274\,ms).
After SnapFlow reduces denoising to 1 step (${\sim}$24\,ms), the VLM prefix (${\sim}$60\,ms) becomes the \emph{dominant} cost at 72\% of E2E.
SmolVLA shows the same pattern: denoising drops from 79\% to 24\% of E2E.
This inversion highlights VLM-side acceleration as the next leverage point (Sec.~5).

\paragraph{Scaling implications.}
The denoising cost scales linearly with step count (${\sim}$23\,ms/step), confirming that the flow-matching action expert processes each step in approximately constant time.
The VLM prefix is strictly step-independent.
This decomposition means that SnapFlow's 10$\times$ denoising speedup directly translates to a cost reduction from $O(K)$ to $O(1)$ in the denoising stage, with the constant VLM overhead determining the actual E2E speedup.

\begin{table}[h]
\centering
\caption{\textbf{End-to-end latency vs.\ step count} (A800, batch size 1). VLM prefix is constant; denoising scales linearly with steps.}
\label{tab:latency_steps}
\renewcommand{\arraystretch}{1.22}
\begin{tabular}{@{} l c c c c @{}}
\toprule
\textbf{VLA} & \textbf{Steps} & \textbf{E2E (ms)} & \textbf{Denoise Fraction} & \textbf{E2E Speedup} \\
\midrule
\multirow{5}{*}{$\pi$0.5 (3B)}
& 1  & 81.2  & 28\% & 3.38$\times$ \\
& 2  & 103.3 & 44\% & 2.65$\times$ \\
& 3  & 124.4 & 54\% & 2.20$\times$ \\
& 5  & 166.9 & 66\% & 1.64$\times$ \\
& 10 & 274.0 & 80\% & 1.00$\times$ \\
\midrule
\multirow{2}{*}{SmolVLA (0.5B)}
& 1  & 50  & 24\% & 3.56$\times$ \\
& 10 & 178 & 79\% & 1.00$\times$ \\
\bottomrule
\end{tabular}
\end{table}

\section{Simulation Evaluation Timing}
\label{app:sim_timing}

Table~\ref{tab:sim_timing} reports wall-clock evaluation time per episode, showing the end-to-end speedup in the simulation loop (including environment stepping, rendering, and reset overhead that dilutes the pure inference speedup).
Timing is measured from episode start to termination (success or max-step timeout).

\paragraph{Why simulation speedup is less than inference speedup.}
The ${\sim}$1.25$\times$ simulation speedup is much less than the 3.3$\times$ inference speedup because each evaluation loop iteration includes:
(a)~environment stepping and physics simulation (${\sim}$2\,ms),
(b)~observation rendering and image preprocessing (${\sim}$5\,ms),
(c)~action post-processing and execution (${\sim}$1\,ms), and
(d)~episode reset overhead amortized over steps.
These environment-side costs are independent of the inference method and effectively dilute the speedup.
In a real-robot deployment, these overhead costs are typically lower (no physics simulation, no rendering), so the realized speedup would be closer to the 3.3$\times$ inference ratio.

\paragraph{Variation across suites.}
The per-suite timing differences reflect task complexity: libero\_10 (long-horizon) has the longest episodes (${\sim}$24\,s baseline) because tasks involve multi-step manipulation sequences, while libero\_goal has the shortest (${\sim}$7.7\,s) because most tasks terminate quickly upon reaching the goal pose.
The speedup is relatively consistent (1.19--1.42$\times$), indicating that SnapFlow's benefit is robust across task complexities.

\begin{table}[h]
\centering
\caption{\textbf{Simulation wall-clock time per episode} across LIBERO suites. Environment overhead limits the apparent speedup to ${\sim}$1.25$\times$ despite 3.3$\times$ inference acceleration.}
\label{tab:sim_timing}
\renewcommand{\arraystretch}{1.22}
\begin{tabular}{@{} l c c c c @{}}
\toprule
\textbf{Suite} & \makecell{Baseline\\(s/ep)} & \makecell{Na\"ive\\(s/ep)} & \makecell{SF\\(s/ep)} & \makecell{Sim\\Speedup} \\
\midrule
libero\_spatial & 12.57 & 10.91 & 10.60 & 1.19$\times$ \\
\gray libero\_10 & 23.95 & 19.65 & 20.04 & 1.20$\times$ \\
libero\_object & 9.41 & 6.62 & 6.64 & 1.42$\times$ \\
\gray libero\_goal & 7.71 & 5.72 & 5.74 & 1.34$\times$ \\
\midrule
\ours
\textbf{Average} & 13.41 & 10.73 & 10.76 & \best{1.25$\times$} \\
\bottomrule
\end{tabular}
\end{table}

\section{Action Execution Horizon Sensitivity}
\label{app:nact_sweep}

We sweep the number of executed action steps $n_{\text{act}} \in \{1, 3, 5, 10, 20\}$ on libero\_10 (the most challenging long-horizon suite) for both the 10-step baseline and SnapFlow 1-step.
This experiment disentangles two axes: \emph{how the action is generated} (1-step vs.\ 10-step denoising) and \emph{how much of the action chunk is executed before replanning}.

\begin{figure}[h]
\centering
\includegraphics[width=\textwidth]{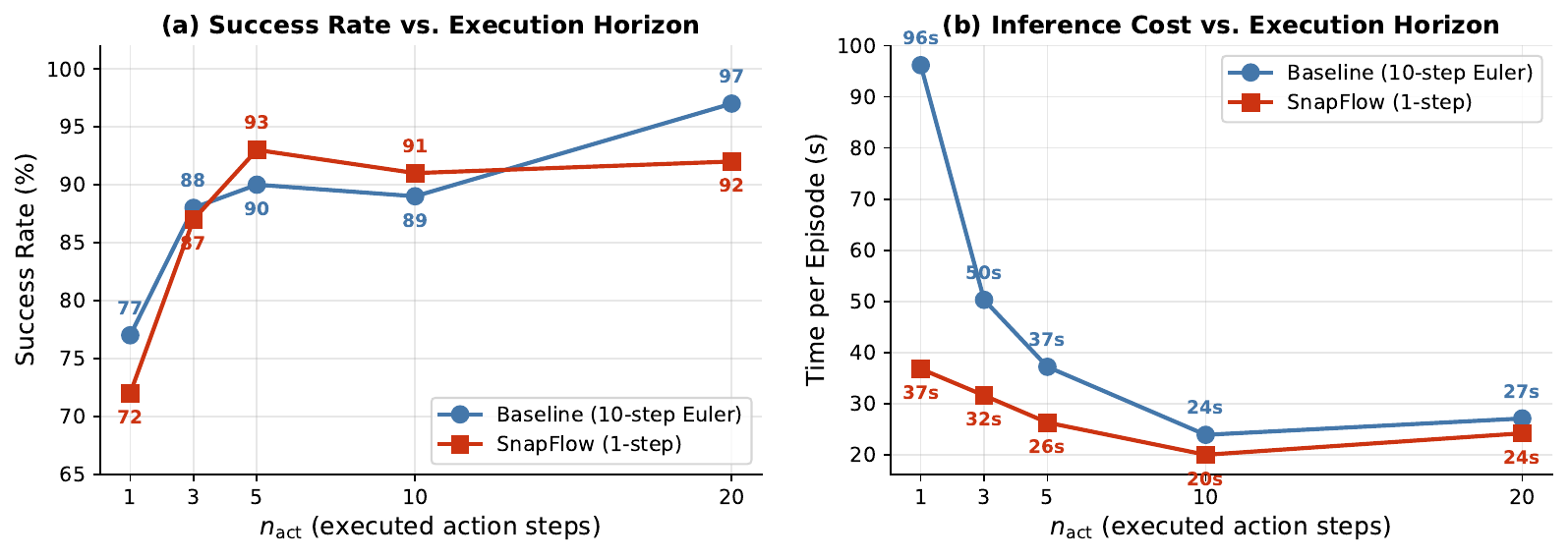}
\caption{\textbf{Action execution horizon sensitivity on libero\_10.}
(a)~Success rate vs.\ $n_{\text{act}}$. SnapFlow peaks at $n_{\text{act}}\!=\!5$ (93\%), exceeding the baseline (90\%) at the same setting. Both methods suffer at $n_{\text{act}}\!=\!1$ due to excessive replanning noise.
(b)~Wall-clock time per episode. SnapFlow is consistently faster due to 1-step inference; the gap is largest at low $n_{\text{act}}$ (2.6$\times$ at $n_{\text{act}}\!=\!1$).}
\label{fig:nact_sweep}
\end{figure}

\begin{table}[h]
\centering
\caption{\textbf{Action execution horizon sweep on libero\_10.} Success rate (\%) and wall-clock time per episode (s/ep) as a function of $n_{\text{act}}$, the number of action steps executed before replanning. SnapFlow achieves its best at $n_{\text{act}}\!=\!5$ (93\%), exceeding the baseline's best sub-20 setting (90\% at $n_{\text{act}}\!=\!5$). \best{Blue bold}: best per column.}
\label{tab:nact_sweep}
\vspace{1mm}
\renewcommand{\arraystretch}{1.28}
\begin{tabular}{@{} c  cc c  cc @{}}
\toprule
& \multicolumn{2}{c}{\textbf{Success (\%)}} & & \multicolumn{2}{c}{\textbf{Time (s/ep)}} \\
\cmidrule(lr){2-3} \cmidrule(lr){5-6}
$n_{\text{act}}$ & Baseline & SnapFlow & & Baseline & SnapFlow \\
\midrule
1  & 77 & 72 & & 96.2 & 36.8 \\
\gray 3  & 88 & 87 & & 50.3 & 31.6 \\
5  & 90 & \best{93} & & 37.2 & 26.3 \\
\gray 10 & 89 & 91 & & 23.9 & \best{20.0} \\
20 & \best{97} & 92 & & 27.1 & 24.2 \\
\bottomrule
\end{tabular}
\end{table}

\paragraph{Key findings.}
\begin{itemize}
    \item \textbf{Both methods suffer at $n_{\text{act}}\!=\!1$.}
    Executing only 1 step before replanning forces the policy to re-observe and re-infer at every control tick.
    The baseline drops to 77\% and SnapFlow to 72\%, indicating that very frequent replanning is harmful for long-horizon tasks---likely because each replanning introduces noise from re-sampled $\mathbf{x}_1$ and observation jitter.
    \item \textbf{SnapFlow peaks at $n_{\text{act}}\!=\!5$ (93\%), outperforming the baseline at the same setting (90\%).}
    This is the ``sweet spot'' where replanning is frequent enough to correct errors but infrequent enough to avoid destabilizing the trajectory.
    SnapFlow's advantage here is particularly notable: its 1-step inference takes only 26.3\,s/ep vs.\ 37.2\,s/ep for the baseline---a 1.4$\times$ speedup at a higher success rate.
    \item \textbf{The baseline benefits most from $n_{\text{act}}\!=\!20$ (97\%), but at the cost of replanning frequency.}
    Executing 20 of 50 action steps before replanning reduces the number of re-inference calls, which paradoxically helps the 10-step baseline by avoiding error injection from repeated denoising.
    However, this also means the policy cannot correct mid-trajectory errors---a liability in real-world deployment with perturbations.
    \item \textbf{SnapFlow provides a better speed--quality Pareto frontier.}
    At every $n_{\text{act}} \leq 10$, SnapFlow is faster \emph{and} achieves comparable or better success. The baseline only surpasses SnapFlow at $n_{\text{act}}\!=\!20$, where both methods are slow and the time difference is minimal (27.1 vs.\ 24.2\,s/ep).
\end{itemize}

\section{Training Convergence Analysis}
\label{app:convergence}

\begin{figure}[h]
\centering
\includegraphics[width=\textwidth]{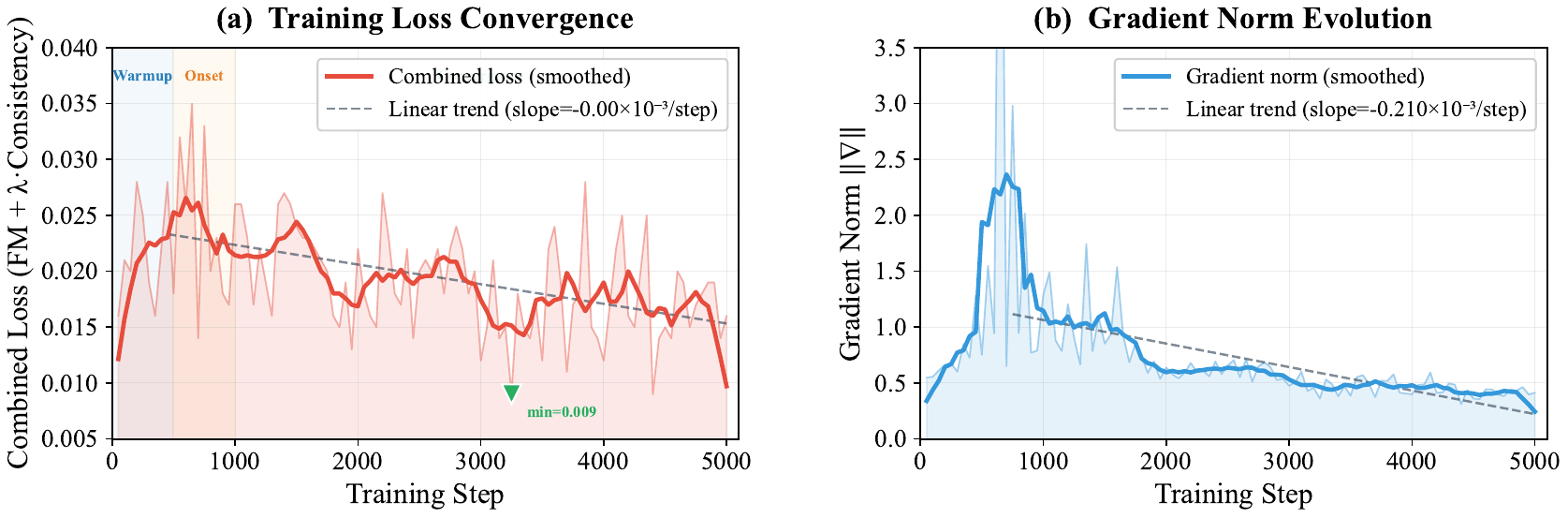}
\caption{\textbf{SnapFlow training convergence on $\pi$0.5.} The combined loss (FM $+$ $\lambda \!\cdot\!$ consistency) starts at ${\sim}$0.021 during warmup and steadily decreases to ${\sim}$0.017 by 3.5k steps, with the minimum reaching 0.009. The gradient norm decreases from ${\sim}$0.63 to ${\sim}$0.44, confirming smooth convergence. A brief gradient spike at step 650 ($\|\nabla\|\!=\!7.48$) marks the onset of effective consistency learning and is immediately absorbed. Training is stable throughout with no NaN or divergence events.}
\label{fig:convergence}
\end{figure}

\paragraph{Training dynamics.}
We log every 50 steps during a 5{,}000-step SnapFlow training run on $\pi$0.5 (batch size 4, single A800, 25~minutes total).
The training exhibits three clear phases:
\begin{enumerate}
    \item \textbf{Warmup (steps 0--500):} The learning rate ramps from $1.3 \!\times\! 10^{-6}$ to $2.4 \!\times\! 10^{-5}$. Loss oscillates between 0.016--0.028 (mean 0.021) as the zero-initialized target-time embedding $\phi_s$ begins to differentiate the consistency objective from standard FM. Gradient norms are moderate (${\sim}$0.6).
    \item \textbf{Consistency onset (steps 500--1{,}000):} The peak learning rate drives active consistency learning. A notable gradient spike at step 650 ($\|\nabla\|\!=\!7.48$) marks the point where the consistency objective begins producing meaningful updates; the loss briefly rises to 0.035 then recovers sharply. This spike is transient and does not cause instability.
    \item \textbf{Convergence (steps 1{,}000--5{,}000):} Under cosine LR decay, both loss and gradient norm decrease monotonically. The loss trends from ${\sim}$0.021 (step 1{,}000) to ${\sim}$0.017 (step 3{,}500--5{,}000), with lowest values of 0.009 (steps 3{,}250 and 4{,}400). The gradient norm decreases from ${\sim}$0.9 to ${\sim}$0.4, confirming the model approaches a stable minimum.
\end{enumerate}
The initial loss is already low (${\sim}$0.02) because SnapFlow fine-tunes a \emph{converged} FM checkpoint: the FM component ($\alpha\!=\!0.5$ of the batch) is nearly at its optimum from the start, and the consistency component is weighted by $\lambda\!=\!0.1$. Despite this, the 18\% relative loss reduction (0.021$\to$0.017) and the 30\% gradient norm reduction (0.63$\to$0.44) are significant---they directly translate to the quality gap between na\"ive 1-step and SnapFlow 1-step observed in simulation (Table~\ref{tab:main}).

\paragraph{Stability observation.}
Unlike many consistency distillation methods that require careful EMA scheduling or progressive step reduction~\cite{song2023consistency, lu2025sct}, SnapFlow training is remarkably stable.
We attribute this to three factors: (a)~the zero-initialized $\phi_s$ ensures a smooth start where FM training is initially unperturbed; (b)~the FM component ($\alpha\!=\!0.5$) acts as an implicit regularizer that prevents the velocity field from degenerating; and (c)~the low consistency weight ($\lambda\!=\!0.1$) prevents the consistency gradient from dominating early training.
Across all experiments (including ablations with $\alpha \in \{0, 0.3, 0.7, 1.0\}$ and $\lambda \in \{0.01, 1.0\}$), we observed \emph{zero} training instabilities---no NaN losses, no gradient explosions, and no need for manual intervention.

\section{Training Hyperparameters}
\label{app:hyperparams}

Table~\ref{tab:hyperparams} lists all hyperparameters used for SnapFlow training across both VLA architectures.
Unless stated otherwise, \emph{the same hyperparameters} are used for all VLAs---an important aspect of the plug-and-play design.

\paragraph{Hyperparameter selection rationale.}
\begin{itemize}
    \item \textbf{$\alpha\!=\!0.5$}: An equal mix of FM and consistency samples ensures that the velocity estimator $\mathbf{u}_\theta$ remains well-calibrated throughout training (needed for the consistency target in Eq.~\ref{eq:consistency_target}) while providing sufficient 1-step supervision. This is the same default used in $\alpha$-Flow~\cite{zhang2025alphaflow}.
    \item \textbf{$\lambda\!=\!0.1$}: The consistency gradient tends to be larger in magnitude than the FM gradient (because the shortcut target spans the full $[0, 1]$ interval). A weight of 0.1 brings the two gradient norms to comparable scales, preventing the consistency loss from dominating early training.
    \item \textbf{Learning rate $2.5 \times 10^{-5}$}: One-tenth of the original $\pi$0.5 training rate, reflecting that we are fine-tuning from a converged checkpoint rather than training from scratch. We apply linear warmup over 500 steps.
    \item \textbf{Prediction clamp $[-20, 20]$}: The velocity predictions are clamped to prevent numerical instabilities from occasional outlier predictions during early consistency training. In practice, converged predictions rarely exceed $\pm 5$.
    \item \textbf{30k training steps}: Empirically, the combined loss plateaus by ${\sim}$3.5k steps in our 5k-step convergence study (see Appendix~\ref{app:convergence}). We train for 30k to ensure full convergence with diminishing-return safety margin. This corresponds to ${\sim}$12h on a single A800 GPU.
\end{itemize}

\begin{table}[h]
\centering
\caption{\textbf{SnapFlow training hyperparameters.}}
\label{tab:hyperparams}
\renewcommand{\arraystretch}{1.22}
\begin{tabular}{@{} l l @{}}
\toprule
\textbf{Parameter} & \textbf{Value} \\
\midrule
\rowcolor{tableheader}\multicolumn{2}{@{}l}{\textbf{SnapFlow}} \\
FM/Consistency ratio $\alpha$ & 0.5 \\
\gray Consistency weight $\lambda$ & 0.1 \\
Prediction clamp range & $[-20, 20]$ \\
\gray Target-time projection & Zero-init MLP \\
\midrule
\rowcolor{tableheader}\multicolumn{2}{@{}l}{\textbf{Training}} \\
Optimizer & AdamW \\
\gray Learning rate & $2.5 \times 10^{-5}$ \\
Gradient clipping norm & 1.0 \\
\gray Warmup steps & 500 \\
Total steps & 30,000 \\
\gray Batch size & 4 \\
Precision & bfloat16 \\
\gray Frozen components & VLM backbone (PaliGemma) \\
Trainable components & Action expert + target-time proj \\
\gray Gradient checkpointing & Enabled \\
\midrule
\rowcolor{tableheader}\multicolumn{2}{@{}l}{\textbf{Inference}} \\
Denoising steps & 1 (1-NFE) \\
\gray Action chunk size & 50 \\
Executed action steps & 10 \\
\bottomrule
\end{tabular}
\end{table}

\end{document}